\documentclass[3p,10pt,a4paper,twoside,fleqn,sort&compress]{filomat}
\usepackage{amssymb,amsmath,latexsym}
\usepackage[varg]{pxfonts}
\usepackage{epsfig,fancybox}
\usepackage{graphicx,subfigure}
\usepackage{ulem}
\usepackage{xcolor}

\newtheorem{theorem}{Theorem}[section]

\newtheorem{corollary}[theorem]{Corollary}

\newtheorem{lemma}[theorem]{Lemma}

\begin{document}


\title{Modification of Gesture-Determined-Dynamic Function with Consideration of Margins for Motion Planning of Humanoid Robots}

\author[affil1,affil2]{Zhijun Zhang}
\ead{auzjzhang@scut.edu.cn}
\author[affil1,affil2,affil3]{Lingdong Kong}
\ead{ldkong@ieee.org}
\author[affil1,affil2,affil4]{Yaru Niu}
\ead{yaruniu@gatech.edu}
\author[affil1]{Ziang Liang}
\ead{auzyliang@mail.scut.edu.cn}
\address[affil1]{School of Automation Science and Engineering, South China University of Technology, Guangzhou, China}
\address[affil2]{Center for Brain Computer Interfaces and Brain Information Processing, South China University of Technology, Guangzhou, China}
\address[affil3]{School of Computer Science and Engineering, Nanyang Technological University, Singapore}
\address[affil4]{School of Electrical and Computer Engineering, Georgia Institute of Technology, Atlanta, GA, USA}
\newcommand{\AuthorNames}{Z. Zhang et al.}

\newcommand{\FilMSC}{Primary xxxxx (mandatory); Secondary xxxxx, xxxxx (optionally)}
\newcommand{\FilKeywords}{Humanoid robots, Motion planning, Quadratic programming, Dynamics, Robotic manipulation}
\newcommand{\FilCommunicated}{(name of the Editor, mandatory)}
\newcommand{\FilSupport}{Research supported in part by the National Key Research and Development Program of China under Grant 2017YFB1002505, in part by the National Natural Science Foundation
of China under Grant 61976096, Grant 61603142, and Grant 61633010, in part by the Guangdong Foundation for Distinguished Young Scholars under Grant 2017A030306009, in part by the Guangdong Special Support Program under Grant 2017TQ04X475, in part by the Science and Technology Program of Guangzhou under Grant 201707010225, in part by the Fundamental Research Funds for Central Universities under Grant 2017MS049, in part by the Scientific Research Starting Foundation of South China University of Technology, in part by the National Key Basic Research Program of China (973 Program) under Grant 2015CB351703, in part by the Italian Ministry of Education, University and Research under the Project “Department of Excellence LIS4.0—Lightweight and Smart Structures for Industry 4.0,” in part by the Guangdong Key Research and Development Program under Grant 2018B030339001, and in part by Guangdong Natural Science Foundation Research Team Program 1414060000024.}

\begin{abstract}
The gesture-determined-dynamic function (GDDF) offers an effective way to handle the control problems of humanoid robots. Specifically, GDDF is utilized to constrain the movements of dual arms of humanoid robots and steer specific gestures to conduct demanding tasks under certain conditions. However, there is still a deficiency in this scheme. Through experiments, we found that the joints of the dual arms, which can be regarded as the redundant manipulators, could exceed their limits slightly at the joint angle level. The performance straightly depends on the parameters designed beforehand for the GDDF, which causes a lack of adaptability to the practical applications of this method. In this paper, a modified scheme of GDDF with consideration of margins (MGDDF) is proposed. This MGDDF scheme is based on quadratic programming (QP) framework, which is widely applied to solving the redundancy resolution problems of robot arms. Moreover, three margins are introduced in the proposed MGDDF scheme to avoid joint limits. With consideration of these margins, the joints of manipulators of the humanoid robots will not exceed their limits, and the potential damages which might be caused by exceeding limits will be completely avoided. Computer simulations conducted on MATLAB further verify the feasibility and superiority of the proposed MGDDF scheme.
\end{abstract}

\maketitle

\makeatletter
\renewcommand\@makefnmark%
{\mbox{\textsuperscript{\normalfont\@thefnmark)}}}
\makeatother

\section{Introduction}

A robot is a programmable machine capable of executing complex tasks automatically. Traditionally, robots are designed to perform mechanical actions with no regard to how they look. With the evolution of the robotic industry, as well as the higher psychological requirement of human beings, more and more robots are designed and conducted to appear and behave like human beings \cite{paper.1}. Such similarity can be for functional purposes, for example, interacting with human instruments and facilities, or for other experimental purposes \cite{paper.2,paper.3}. In general, these humanoid robots are endowed with similar body structures of human beings, i.e., a head, a torso, two arms, and two legs \cite{paper.4}. Some of them also have their head parts designed to replicate emotional expressions of the human face, such as eye-blinking and lip-sipping \cite{paper.5}. An automatic perception for facial expressions of humanoid robots proposed by Zhang $et \ al.$ demonstrated the potential in developing personalized robots with social intelligence \cite{paper.6}. Trovato $et \ al.$ created a facial expression generator applied to a KOBIAN-R robot to establish an adaptive system based on human communication and facial anatomy \cite{paper.7}. Such a system is effective for the implementation of complex interactions between humans and robots. For a XIN-REN robot, a facial expression learning method based on energy conservation principle proposed by Ren $et \ al.$ successfully ensured smoother trajectories for multi-frame imitation of humanoid robots \cite{paper.8}.

In addition to the head, the arms are also crucial and indispensable for a humanoid robot. Recently, various novel humanoid robots equipped with arms have been studied and further applied in home service, personal entertainment, inspection, health care, manufacturing, and so on \cite{article.2,article.3,article.4,article.5}. The dual arms of humanoid robots can not only conduct tasks \cite{article.6}, but also convey emotions with body language \cite{article.7, article.8}. To perform daily tasks with dual-arms of humanoid robots, the motion planning problem should be considered. Aly $et \ al.$ proposed a system to generate the dynamic characteristics of gestures and postures of the NAO robot during nonverbal communications \cite{paper.9,paper.10}. Tondu $et \ al.$ designed an anthropomorphic robot arm and its ancillary motion control method \cite{paper.11}.
One of the basic problems of dual-arm motion planning is the inverse kinematic problem \cite{article.9}, i.e., given the trajectories of end-effector, computing joint variables at each time instant \cite{book.8}. Most dual arms of humanoid robots have more than three degrees of freedom (termed as DOF). This redundancy improves the flexibility of dual arms when the humanoid robots implement end-effector tasks \cite{article.11}. However, the redundant DOF also inevitably increases the difficulties of computation \cite{article.12}.

The traditional redundancy resolution method is pseudo-inverse method \cite{article.13}. Wang $et \ al.$ proposed a closed-loop method for solving the inverse kinematics in Ref. \cite{article.14}, which was based on pseudo-inverse method, to control the dual arms on a mobile platform. In addition, the inverse of matrices has to be considered when utilizing the pseudo-inverse method to solving inverse kinematics, which can be challenging \cite{article.15}. Quadratic programming (termed as QP) methods are preferred recently, and a QP-based task priority framework is proposed in Ref. \cite{article.16} to resolve the kinematic redundancy problem. Inspired by the aforementioned optimization method, a QP-based online generation scheme for generating expected gestures of dual arms is proposed by Zhang $et \ al.$ \cite{article.1}, and a gesture-determined-dynamic function (termed as GDDF) is designed and introduced to the kinematics of humanoid robots. This GDDF scheme can not only handle the redundancy resolution problem within the joint limits, but also dynamically adjust gestures to the desired position. Different from the exiting pseudo-inverse method \cite{article.17,article.18,article.19} or very few QP-based methods focusing on single arm \cite{article.20,article.21,article.22}, together with the numerical QP solver, the GDDF scheme can generate dual-arm behaviors for humanoid robots \cite{article.23,article.24,article.26,article.28} in 3-D space online.

However, there is still a deficiency in the GDDF scheme. On the basis of a series of experiments, we find that at some parameter groups during execution, the joint might slightly exceed its limit. That is to say, if the parameter is falsely selected, dead-lock of robot arms or even damages could occur, which is perilous for the practical applications. To remedy this disadvantage, a modified GDDF method named MGDDF is proposed in this paper. With consideration of three kinds of margins (i.e., MGDDF-1, MGDDF-2 and MGDDF-3, which will be carefully discussed in the following sections) are introduced to the MGDDF method to constrain the movement of the joints.

The remainder of this paper is listed as follows. The overviews of related works are presented in Section II. Section III gives the preliminary and problem formulation of dual arms. The traditional GDDF method and the proposed MGDDF method are discussed in Section IV. Simulations and experiments are made in Section V to verify the effectiveness of the proposed method. Section VI offers a concluding remarks.

The main contributions of this paper can be summarized as follows:
\begin{itemize}
\item With consideration of three margins, a modified scheme named MGDDF is proposed for solving kinematic problems of dual arms of humanoid robot.
\item The gesture of dual arms can be smoothly achieved by using MGDDF method, and the joints would not exceed their limits during the whole execution of tasks.
\end{itemize}

\section{Related Works}
The motion planning of humanoid robots have been studied for a few decades. Gouda $et \ al.$ proposed a whole-body motion planning approach of NAO robot using onboard sensing and achieved reliable motion sequences when operating in complex environment \cite{related.1}. A path planning method which computes dynamics and collision avoidance for full-body gestures of humanoid robots was proposed by Kagami $et \ al.$ \cite{related.2}. This method contains a filtering function that constrains the zero moment point of each trajectory to obtain statically stabilization for the entire body. Without lost of generality, Dalibard $et \ al.$ presented a collision-free method for whole-body working control of humanoid robots \cite{related.3}. In Ref. \cite{related.4}, Ohashi $et \ al.$ introduced a linear inverted pendulum mode and designed a collision avoidance method for a robot with an upper body to discriminate the start/stop of force of robot arms. On the basis of Lyapunov stability theory, an adaptive control method was proposed by Liu $et \ al.$ to satisfy the coordination of dual arms of humanoid robots \cite{related.5}. Simulation results prove that when applying this method, the internal force of the arms can be held and the position errors are arbitrarily small.

As for the motion optimization, Ayusawa $et \ al.$ proposed a novel re-targeting method with geometric parameter identification, motion planning and inverse kinematics to handle control problems of humanoid robots \cite{related.6}. A practical experiment performed on HRP-4 robot further verified the excellent performance of this method. In Ref. \cite{related.7}, an optimization-based approach was proposed by Schulman $et \ al.$ to find collision-free trajectories of a 18 degrees-of-freedom robot with dual arms. Hong $et \ al.$ who focused on real-time pattern generation utilized the linear pendulum model to solve the walking control problem of MAHRU-R robot with feed-forward and feed-back controller \cite{related.8}. The desired task hierarchy which is based on quadratic programming was solve by Liu $et \ al.$ to reduce the risk of instability when humanoid robots conducting operations \cite{related.9}. According to HRP-2 robot, a novel plan for foot placements was proposed by Kanoun $et \ al.$ to formulate the deformation problem as the inverse kinematic problem and to further be described as a locomotion phase of the desired tasks. \cite{related.10}. For the posture control problem, Choi $et \ al.$ proposed a kinematic resolution method based on the center of mass Jacobian with embedded motion of humanoid robots \cite{related.11}. A motion planning using swept volume approximation was researched by Perrin $et \ al.$ to achieve the 3-D obstacle avoidance \cite{related.12}.

In the field of gesture generation, Yanik $et \ al.$ proposed a gesture recognition scheme for path shaping of simulated robot \cite{new.0}. They first extracted gesture data from sequential skeletal depth and then clustered them by a growing neural gas algorithm. Cheng $et \ al.$ presented a hand-gesture-recognition method which can track gestures of a
dance robot in real-time \cite{new.1}. The hand gestures were recorded as signals detected by a 3-axis sensor and represented as discrete cosine transform coefficients. Furthermore, a artificial communicator for gesture recognition, proposed by Zhang $et \ al.$, was utilized for human-robot cooperative assembly \cite{new.2}. In Ref. \cite{new.3}, the tracking control of manipulators was solved by nonlinear observers which provided joint coordinate rates. Arteaga $et \ al.$ utilized this method by associating Euler-Lagrange equations to describe gesture motions and achieved high control accuracy.

\section{Preliminary and Problem Formulation}\label{sec.model}

The forward kinematics of dual arms of humanoid robots can be described as
\begin{equation}
\textit{\textbf{p}}_L=\Im_L(\mathbf{\theta}_L), \ \textit{\textbf{p}}_R=\Im_R(\mathbf{\theta}_R)\label{equ.timeinv}
\end{equation}
where $\textit{\textbf{p}}_L$ and $\textit{\textbf{p}}_R$ denote the position vectors of the end effector. $\theta_L$ and $\theta_R$ stand for the vectors of left and right joints. $\Im_L(\cdot)$ and $\Im_R(\cdot)$ are continuous nonlinear functions. These two equations can be obtained when the manipulator is given. For redundant manipulator, the inverse kinematic problem is basic and the motion states of each joint should be solved according to the trajectories of the end effector.

Inspired by previous work in Ref. \cite{article.1}, an online optimization algorithm to solve the redundancy problem is presented. The following equations at velocity level are obtained
\begin{eqnarray}
\frac{\dot{\theta}_L^\text{T}\textit{\textbf{A}}\dot{\theta}_L}{2}+\frac{\dot{\theta}_R^\text{T}\textit{\textbf{B}}\dot{\theta}_R}{2}
+\textit{\textbf{c}}_L^\text{T}\theta_L+\textit{\textbf{c}}_R^\text{T}\theta_R\label{eq4}\\
\textit{\textbf{J}}_L(\theta_L)\dot{\theta}_L=\dot{\textit{\textbf{p}}}_L+\wp_L\big(\textit{\textbf{p}}_L-\Im_L(\theta_L)\big)\label{eq5}\\
\textit{\textbf{J}}_R(\theta_R)\dot{\theta}_R=\dot{\textit{\textbf{p}}}_R+\wp_R\big(\textit{\textbf{p}}_R-\Im_R(\theta_R)\big)\label{eq6}\\
\theta_L^-\leqslant\theta_L\leqslant\theta_L^+\label{eq7}\\
\theta_R^-\leqslant\theta_R\leqslant\theta_R^+\label{eq8}\\
\dot{\theta}_L^-\leqslant\dot{\theta}_L\leqslant\dot{\theta}_L^+\label{eq9}\\
\dot{\theta}_R^-\leqslant\dot{\theta}_R\leqslant\dot{\theta}_R^+\label{eq10}
\end{eqnarray}
where superscript $^\text{T}$ denotes the transpose of a vector or a matrix. $\dot{\theta}_{L}$ and $\dot{\theta}_{R}\in \mathbb{R}^n$ denote joint-velocity vectors of left and right arms. $\textit{\textbf{A}}\in \mathbb{R}^{n\times n}$ and $\textit{\textbf{B}}\in \mathbb{R}^{n\times n}$ are coefficient matrices of quadratic terms. $\textit{\textbf{c}}_L$ and $\textit{\textbf{c}}_R$ are coefficient vectors related to linear terms. $\textit{\textbf{J}}_L(\cdot)$ and $\textit{\textbf{J}}_R(\cdot)\in \mathbb{R}^{n\times m}$ are the Jacobian matrices of the left and right arms, respectively. $\dot{\textit{\textbf{p}}}_{L}$ and $\dot{\textit{\textbf{p}}}_{R}\in \mathbb{R}^n$ denote velocity vectors of the end effector. The position-error feedbacks $\wp_L\big(\textit{\textbf{p}}_L-\Im_L(\theta_L)\big)$ and $\wp_R\big(\textit{\textbf{p}}_R-\Im_R(\theta_R)\big)$ are considered with non-negative parameters $\wp_L$ and $\wp_R$. The series of inequalities (\ref{eq7})-(\ref{eq10}) are the constraints of joints and joint-velocities. $\theta_L^\pm$ and $\theta_R^\pm$ denote physical joint limits, while $\dot{\theta}_L^\pm$ and $\dot{\theta}_R^\pm$ denote joint-velocity limits, respectively.

The aforementioned dual formulas (\ref{eq5})-(\ref{eq10}) can be combined as matrix forms to offer a much more clear expression, i.e.,

$\bullet$ position-error feedback equalities (\ref{eq5})-(\ref{eq6}) are combined as
\begin{equation}
\begin{bmatrix}
\textit{\textbf{J}}_\text{L} &\textbf{\text{0}}_{m\times n}\\
\textbf{\text{0}}_{m\times n} &\textit{\textbf{J}}_\text{R}
\end{bmatrix}
\cdot
\begin{bmatrix}
\dot{\theta}_L\\
\dot{\theta}_R
\end{bmatrix}
=
\begin{bmatrix}
\dot{\textit{\textbf{p}}}_L\\
\dot{\textit{\textbf{p}}}_R
\end{bmatrix}
+
\begin{bmatrix}
\wp_L\big(\textit{\textbf{p}}_L-\Im_L(\theta_L)\big)\\
\wp_R\big(\textit{\textbf{p}}_R-\Im_R(\theta_R)\big)
\end{bmatrix};
\label{eqn.c1}
\end{equation}

$\bullet$ joint limit inequalities (\ref{eq7})-(\ref{eq8}) are combined as
\begin{eqnarray}
\begin{bmatrix}
\theta_L^-\\
\theta_R^-
\end{bmatrix}
\leqslant
\begin{bmatrix}
\theta_L\\
\theta_R
\end{bmatrix}
\leqslant
\begin{bmatrix}
\theta_L^+\\
\theta_R^+
\end{bmatrix}
\in \mathbb{R}^{2n};
\label{eqn.c2}
\end{eqnarray}

$\bullet$ joint-velocity limit inequalities (\ref{eq9})-(\ref{eq10}) are combined as
\begin{eqnarray}
\begin{bmatrix}
\dot{\theta}_L^-\\
\dot{\theta}_R^-
\end{bmatrix}
\leqslant
\begin{bmatrix}
\dot{\theta}_L\\
\dot{\theta}_R
\end{bmatrix}
\leqslant
\begin{bmatrix}
\dot{\theta}_L^+\\
\dot{\theta}_R^+
\end{bmatrix}
\in \mathbb{R}^{2n}.
\label{eqn.c3}
\end{eqnarray}

With the consideration of objective function (\ref{eq4}) and modified constrains (\ref{eqn.c1})-(\ref{eqn.c3}), the following standard QP can be formulated as follows
\begin{eqnarray}
\text{min.} \ ~
\frac{\dot{\Theta}^\text{T}\textit{\textbf{M}}\dot{\Theta}}{2}+\textit{\textbf{c}}^\text{T}\Theta\label{eq14}~~~~~~~~~~~~~~~~~~ \\
\text{s. t.} \ ~
\textit{\textbf{J}}(\Theta)\dot{\Theta}=\dot{\Upsilon}+\wp(\Upsilon-\Im(\Theta))\\
\Theta^-\leqslant\Theta\leqslant\Theta^+\label{eq16}\\
\dot{\Theta}^-\leqslant\dot{\Theta}\leqslant\dot{\Theta}^+\label{eq17}
\end{eqnarray}
where
$\Theta=[\theta_L^\text{T}, \theta_R^\text{T}]$, $\textit{\textbf{c}}=[\textit{\textbf{c}}_L^\text{T}, \textit{\textbf{c}}_R^\text{T}]$,
$\Theta^-=[\theta_L^{-\text{T}}, \theta_R^{-\text{T}}]^\text{T}$,
$\Theta^+=[\theta_L^{+\text{T}}, \theta_R^{+\text{T}}]^\text{T}$,
$\dot{\Theta}=\text{d}\Theta/\text{d}t=[\dot{\theta}_L^\text{T}, \dot{\theta}_R^\text{T}]$,
$\dot{\Theta}^-=[\dot{\theta}_L^{-\text{T}}, \dot{\theta}_R^{-\text{T}}]^\text{T}$,
$\dot{\Theta}^+=[\dot{\theta}_L^{+\text{T}}, \dot{\theta}_R^{+\text{T}}]^\text{T}$,
$\dot{\Upsilon}=[\dot{\textit{\textbf{p}}}_L^\text{T}, \dot{\textit{\textbf{p}}}_R^\text{T}]^\text{T}$,
\begin{equation}
\textit{\textbf{M}}=
\begin{bmatrix}
\textit{\textbf{A}} &\textbf{\text{0}}_{n\times n}\\
\textbf{\text{0}}_{n\times n} &\textit{\textbf{B}}
\end{bmatrix}
\in \mathbb{R}^{2n\times 2n},
\end{equation}
\begin{equation}
\textit{\textbf{J}}=
\begin{bmatrix}
\textit{\textbf{J}}_L &\textbf{\text{0}}_{n\times n}\\
\textbf{\text{0}}_{n\times n} &\textit{\textbf{J}}_R
\end{bmatrix}
\in \mathbb{R}^{2m\times 2n},
\end{equation}
\begin{equation}
\wp=
\begin{bmatrix}
\wp_L &\textbf{\text{0}}_{m\times m}\\
\textbf{\text{0}}_{m\times m} &\wp_R
\end{bmatrix}
\in \mathbb{R}^{2m\times 2m}.
\end{equation}

It is worth pointing out that vector $\textit{\textbf{c}}$ and matrix $\textit{\textbf{M}}$ are determined by a definite redundancy-resolution scheme. Matrix $\wp$ stands for the feedback gain and is determined by practical effect. Different goals can be obtained by choosing different criteria of vector $\textit{\textbf{c}}$ and matrix $\textit{\textbf{M}}$. The following three schemes are introduced, i.e.,

$\bullet$ MKE scheme:

If vector $\textit{\textbf{c}}=[\textbf{\text{0}}; \textbf{\text{0}}]$, and matrix $\textit{\textbf{M}}=[\mathcal{I}_L, 0; 0, \mathcal{I}_R]$ (where $\mathcal{I}$ denotes inertia matrix), then the QP problem (\ref{eq14})-(\ref{eq17}) forms a minimum-kinetic-energy (MKE) scheme.

$\bullet$ RMP scheme:

If vector $\textit{\textbf{c}}=[\lambda(\theta_L-\theta_L(0)); \lambda(\theta_R-\theta_R(0))]$ (where $\lambda$ is a non-negative parameter), and matrix $\textit{\textbf{M}}=\textit{\textbf{I}}$ is an identity matrix, then the QP problem (\ref{eq14})-(\ref{eq17}) forms a repetitive motion planning (RMP) scheme.

$\bullet$ MVN scheme:

If vector $\textit{\textbf{c}}=[\textbf{\text{0}}$, and matrix $\textit{\textbf{M}}=\textit{\textbf{I}}$ is an identity matrix, then the QP problem (\ref{eq14})-(\ref{eq17}) forms a minimum-velocity-norm (MVN) scheme.

\section{Proposed Methods}

Humanoid robots are not only asked to finish the end-effector tasks, but also are demanded to ``behave'' naturally as a human \cite{paper.8,article.23,article.24}. Being the most straight way of expressing feelings and emotions, gestures of humanoid robots have been widely designed and exerted to accomplish natural human-robot interactions \cite{article.26,article.23,article.28}. A gesture-determined-dynamic function (GDDF) is presented in Ref. \cite{article.1}. Associating with a QP framework (\ref{eq14})-(\ref{eq17}) and the GDDF method, the researchers expect to cause expected gestures of humanoid robots.

\subsection{GDDF Scheme}

To generate the expected movement and to execute the end-effector tasks, some joints of the dual arms should be dynamically adjustable \cite{article.1,article.8,article.9,myarticle.11,myarticle.21,myarticle.31,myarticle.41}. The joint limits and joint-velocity limits have already been formulated into QP problem (\ref{eq14})-(\ref{eq17}). A dynamic function which can change the upper and the lower bounds of the limits should be found, and the bounds should be related to the expectation values. Besides, the generated curves should be smooth during the executing process. The following function is then designed to satisfy the demand and the joints can adjust to the expected gesture, i.e.,
\begin{equation}
\tilde{\Theta}^\pm(t)=\Theta^\pm+\frac{\Delta\Theta^\pm}{1+e^{-(t-\tau)/\varrho}}\label{fun18}
\end{equation}
where $\Delta\Theta^\pm=\Theta_{g}^\pm-\Theta^\pm$, and $\Theta_{g}=[\Theta_{gL}^\text{T}; \Theta_{gR}^\text{T}]$ sets the goal configuration of joint. $\varrho$ is a parameter affecting the variation trend. Parameter $\tau=T_d/N$, with $T_d$ denoting the task execution time and $N\geqslant 1$ denoting the parameter affecting the proximity between the adjusted value and the actual value (set point). $\Delta\Theta$ determines whether $\tilde{\Theta}$ increases or decreases. Normally speaking $\tilde{\Theta}^+$ decreases while $\tilde{\Theta}^-$ increases. Fraction $1/[1+e^{-(t-\tau)/\varrho}]$ is a gradual and smooth function and it influences the curve shape of the joint limit. When time $t$ approaches infinite, $e^{-(t-\tau)/\varrho}$ will approach $0$, thus the fraction will approach $1$, and $\tilde{\Theta}^\pm$ will approach $\Theta^\pm+\Delta\Theta^\pm=\Theta_{g}^\pm$.

Function (\ref{fun18}) can lead to the expected movement smoothly and slowly. On the basis of smooth function (\ref{fun18}) and QP-based framework (\ref{eq14})-(\ref{eq17}), the GDDF scheme can be successfully established. The upper and lower bounds of the inequalities will constrain the movement of the joints, and make the joints move just as expected. For dual arms of humanoid robots, all the joints can be adjusted to the tasks at the same time. As for each \emph{i}-th joint, we can set $\Theta_{g\emph{i}}^+=\Theta^+$ and $\Theta_{g\emph{i}}^-=\Theta^-$, and the corresponding joints would reach the target values equally. To achieve the joints' expected gestures, a series of values of parameters $\varrho$, $\tau$ and $N$ which influence the proximity between the adjusted value and the actual value can be easily set.

\subsection{MGDDF Scheme with Consideration of Margins}
The gestures can be smoothly achieved by using GDDF method. However, there is still a deficiency in this scheme. Through the experiment we find that the joint might exceed its limit slightly at some parameter groups. To remedy this problem, three kinds of margins are introduced to constrain the movement of the joints. This modified GDDF method is termed as MGDDF method.

$\bullet$ MGDDF-1:

\begin{figure*}
  \centering
  \includegraphics[width=.99\textwidth]{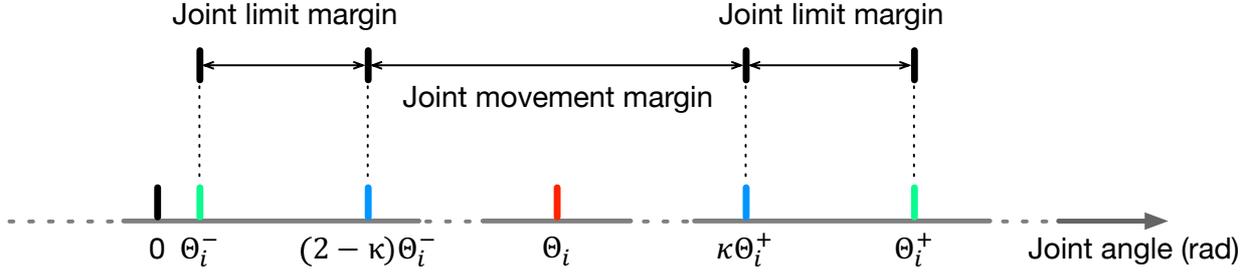}\\
  \caption{Graphical representation of the margins of MGDDF-1. Both upper bound and lower bound of joint limits are greater than zero.}\label{fig.MGDDF-1}
\end{figure*}

Firstly, joint limit (\ref{eq16}) is rewritten as
\begin{equation}
\tilde{\Theta}^-\leqslant\Theta\leqslant\tilde{\Theta}^+
\label{eqn.MGDDF-1}
\end{equation}

Secondly, a margin set at velocity level is considered, and the joint limit constraint (\ref{eqn.MGDDF-1}) is modified as follows
\begin{equation}
\nu\big((2-\kappa)\tilde{\Theta}^-(t)-\Theta\big)\leqslant\dot{\Theta}\leqslant\nu\big(\kappa\tilde{\Theta}^+(t)-\Theta\big)
\label{eqn.cons}
\end{equation}
where $\dot{\Theta}$ denote the joint velocity and $\nu>0$ is a parameter which is applied to scale the feasible region of $\dot{\Theta}$. For simplicity, $\nu$ is set as $\nu=2$ in the following experiments. The critical coefficient $\kappa\in(0,1)$ is selected to define a critical region $(\Theta_i^-, (2-\kappa)\Theta_i^-]$ and $[\kappa\Theta_i^+, \Theta_i^+)$, which is shown in Fig. \ref{fig.MGDDF-1}. It is worth noting that, MGDDF-1 is suitable for the case that the upper bound and lower bound of joint limits are greater than zero.

Moreover, for the \emph{i}-th joint, the joint-velocity constraint of MGDDF-1 are be formulated as follows
\begin{eqnarray}
\begin{split}
\text{max}\big\{\dot{\Theta}^-,\nu\big((2-\kappa)\tilde{\Theta}_{\emph{i}}^-(t)-\Theta_{\emph{i}}\big)\big\}&\leq\dot{\Theta}_{\emph{i}}\\
\text{min}\big\{\dot{\Theta}^+,\nu\big(\kappa\tilde{\Theta}_{\emph{i}}^+(t)-\Theta_{\emph{i}}\big)\big\}&\geq\dot{\Theta}_{\emph{i}}.
\end{split}
\label{eqn.v-MGDDF-1}
\end{eqnarray}

To simplify the expression of joint-velocity constraint (\ref{eqn.v-MGDDF-1}), the following equations are introduced, i.e.,
\begin{eqnarray}
\tilde{\zeta}_{\emph{i}}^-(t)=\text{max}\big\{\dot{\Theta}^-,\nu\big((2-\kappa)\tilde{\Theta}_{\emph{i}}^-(t)-\Theta_{\emph{i}}\big)\big\}\\
\tilde{\zeta}_{\emph{i}}^+(t)=\text{min}\big\{\dot{\Theta}^+,\nu\big(\kappa\tilde{\Theta}_{\emph{i}}^+(t)-\Theta_{\emph{i}}\big)\big\}.
\end{eqnarray}

$\bullet$ MGDDF-2:

\begin{figure*}
  \centering
  \includegraphics[width=.99\textwidth]{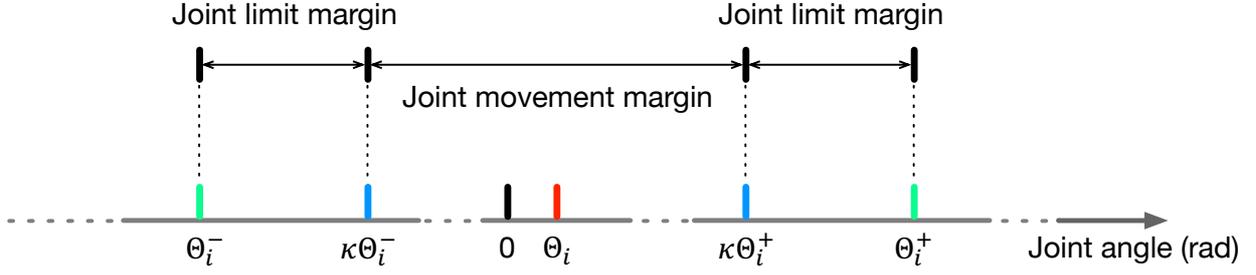}\\
  \caption{Graphical representation of the margins of MGDDF-2. The upper bound of joint limits is greater than zero, while the lower  bound of joint limits is less than zero.}\label{fig.MGDDF-2}
\end{figure*}

Considering the case that upper limit $>0$ and lower limit $<0$. On the basis of constraint (\ref{eqn.cons}), the critical region of MGDDF-2 is  $(\Theta_i^-, \kappa\Theta_i^-]$ and $[\kappa\Theta_i^+, \Theta_i^+)$. The graphical representation of the margins of MGDDF-2 is depicted in Fig. \ref{fig.MGDDF-1}. Similar to MGDDF-1, the joint-velocity constraint of MGDDF-2 is simplified as the following relationships, i.e.,
\begin{eqnarray}
\tilde{\zeta}_{\emph{i}}^-(t)=\text{max}\big\{\dot{\Theta}^-,\nu\big(\kappa\tilde{\Theta}_{\emph{i}}^-(t)-\Theta_{\emph{i}}\big)\big\}\\ \tilde{\zeta}_{\emph{i}}^+(t)=\text{min}\big\{\dot{\Theta}^+,\nu\big(\kappa\tilde{\Theta}_{\emph{i}}^+(t)-\Theta_{\emph{i}}\big)\big\}.
\end{eqnarray}

$\bullet$ MGDDF-3:

When both the upper bound and lower bound of limits are less than zero, the critical region of MGDDF-3 are $(\Theta_i^-, \kappa\Theta_i^-]$ and $[(2-\kappa)\Theta_i^+, \Theta_i^+)$. Sharing the same process, the following result is obtained, i.e.,
\begin{eqnarray}
\tilde{\zeta}_{\emph{i}}^-(t)=\text{max}\big\{\dot{\Theta}^-,\nu\big(\tilde{\Theta}_{\emph{i}}^-(t)-\Theta_{\emph{i}}\big)\big\}\\
\tilde{\zeta}_{\emph{i}}^+(t)=\text{min}\big\{\dot{\Theta}^+,\nu\big((2-\kappa)\tilde{\Theta}_{\emph{i}}^+(t)-\Theta_{\emph{i}}\big)\big\}.
\end{eqnarray}

Combining the aforementioned MGDDF-1, MGDDF-2 and MGDDF-3, the MGDDF scheme with consideration of margins is obtained, i.e.,
\begin{eqnarray}
\text{min.} \ ~
\frac{\dot{\Theta}^\text{T}\textit{\textbf{M}}\dot{\Theta}}{2}+\textit{\textbf{c}}^\text{T}\Theta\label{eq32}~~~~~~~~~~~~~~~~~~ \\
\text{s. t.} \ ~
\textit{\textbf{J}}(\Theta)\dot{\Theta}=\dot{\Upsilon}+\wp(\Upsilon-\Im(\Theta))\label{eq33}\\
\tilde{\zeta}_{\emph{i}}^-(t)\leqslant\dot{\Theta}\leqslant\tilde{\zeta}_{\emph{i}}^+(t)\label{eq34}
\end{eqnarray}

\begin{figure*}
  \centering
  \includegraphics[width=.99\textwidth]{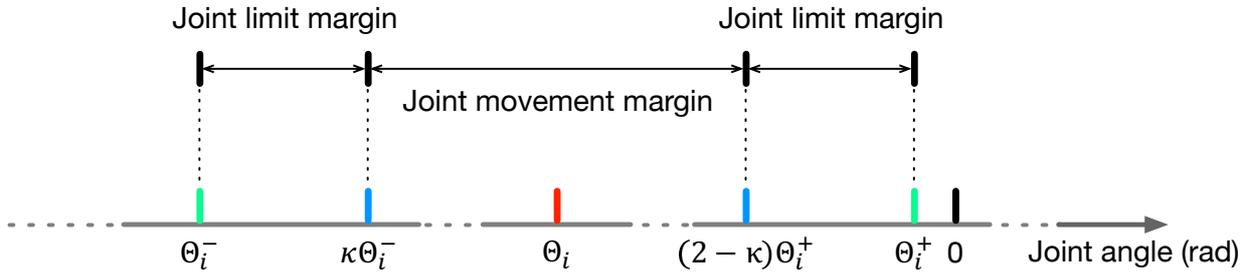}\\
  \caption{Graphical representation of the margins of MGDDF-3. Both upper bound and lower bound of joint limits are less than zero.}\label{fig.MGDDF-3}
\end{figure*}

In summary, the kinematic task of the dual arms of the humanoid robot can be successfully finished by using the proposed MGDDF scheme (\ref{eq32})-(\ref{eq34}). The end-effector velocities $\dot{\textit{\textbf{p}}}_L$ and $\dot{\textit{\textbf{p}}}_R$ are integrated into equation (\ref{eq33}). The key function (\ref{fun18}) determines the upper and lower bounds of inequality (\ref{eq34}). It is worth mentioning that the proposed MGDDF scheme can be solved by a discrete QP solver, which will be discussed in the following sub-section.

\subsection{MGDDF Solvers}

According to Refs. \cite{article.12, article.15}, a linear-variational-inequality (LVI) is introduced to solve MGDDF scheme (\ref{eq32})-(\ref{eq34}). Equivalently, a linear projection equation is led to replace these equations, i.e.,
\begin{equation}
\Phi_{\Omega}\big(\textit{\textbf{d}}-(\Gamma \textit{\textbf{d}}+\textit{\textbf{q}})\big)-\textit{\textbf{d}}=0
\end{equation}
where $\Phi_{\Omega}(\cdot)$ ($\mathbb{R}^{2n+2m}\rightarrow\Omega$) is a projection operator with set $\Omega=\{\textit{\textbf{d}} ~|~\textit{\textbf{d}}^-\leqslant\textit{\textbf{d}}\leqslant\textit{\textbf{d}}^+\}\subset \mathbb{R}^{2n+2m}$. $\textit{\textbf{d}}=[\Theta; \iota]$, $\textit{\textbf{d}}^+=[\tilde{\zeta}^+(t); \omega \textit{\textbf{l}}_{\iota}]\in \mathbb{R}^{2n+2m}$,
$\textit{\textbf{d}}^-=[\tilde{\zeta}^-(t); -\omega \textit{\textbf{l}}_{\iota}]\in \mathbb{R}^{2n+2m}$,
$\Gamma=[\textit{\textbf{M}}, -\textit{\textbf{J}}^\text{T}(\Theta); \textit{\textbf{J}}(\Theta), \textbf{\text{0}}]\in \mathbb{R}^{(2n+2m)\times (2n+2m)}$,
$\textit{\textbf{q}}=[\textbf{\text{0}}; -\dot{\Upsilon}]\in \mathbb{R}^{n+m}$, and $\textit{\textbf{l}}_{\iota}=[\textbf{1}, \cdots, \textbf{1}]^\text{T}$.
Besides, $\textit{\textbf{d}}\in \mathbb{R}^m$ denotes the primal-dual decision vector, $\textit{\textbf{d}}^+\in \mathbb{R}^m$ and $\textit{\textbf{d}}^-\in \mathbb{R}^m$ are the upper and lower bounds, severally. $\omega$ is valued enormous ($\varpi:=10^{10}$) in the experiments.
We define $\varepsilon(\textit{\textbf{d}}):= \textit{\textbf{d}}-\Phi_{\Omega}\big(\textit{\textbf{d}}-(\Gamma \textit{\textbf{d}}+\textit{\textbf{q}})\big)$, and the iterative algorithm causes $\varepsilon(\textit{\textbf{d}})\rightarrow 0$. $\textit{\textbf{d}}^0\in \mathbb{R}^{2n+2m}$ is the original primal-dual decision variable vector, $k=0,1,2,3,\cdots$, if $\textit{\textbf{d}}^k\notin \Omega^*$, then
\begin{equation}
\textit{\textbf{d}}^{k+1}=\textit{\textbf{d}}^k-\frac{||\varepsilon(\textit{\textbf{d}}^k)||_2^2\sigma(\textit{\textbf{d}}^k)}
{||\sigma(\textit{\textbf{d}}^k)||_2^2}\label{eq36}
\end{equation}
where $\varepsilon(\textit{\textbf{d}}^k)=\textit{\textbf{d}}^k-\Phi_{\Omega}(\textit{\textbf{d}}^k-(\Gamma \textit{\textbf{d}}^k+\textit{\textbf{q}}))$ and $\sigma(\textit{\textbf{d}}^k)=(\Gamma+I)\varepsilon(\textit{\textbf{d}}^k)$. In numerical calculation, $\varepsilon(\textit{\textbf{d}}^k)=10^{-5}$.

\begin{lemma}
\cite{article.9} The sequence $\{\textit{\textbf{d}}^k\}, k=1, 2, \cdots$, which is generated by QP-solution (\ref{eq36}), satisfies
\begin{equation}
||\textit{\textbf{d}}^{k+1}-\textit{\textbf{d}}^{\star}||_2^2\leqslant||\textit{\textbf{d}}^k-\textit{\textbf{d}}^{\star}||_2^2-
\frac{||\varepsilon(\textit{\textbf{d}}^k)_2^4}{||\sigma(\textit{\textbf{d}}^k)||_2^2}.
\end{equation}
\end{lemma}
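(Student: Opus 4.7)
The plan is to expand $\|\textit{\textbf{d}}^{k+1}-\textit{\textbf{d}}^{\star}\|_2^2$ directly from the update rule (\ref{eq36}) and reduce the entire claim to a single contraction-type cross-term inequality. Abbreviating the step size as $\alpha_k=\|\varepsilon(\textit{\textbf{d}}^k)\|_2^2/\|\sigma(\textit{\textbf{d}}^k)\|_2^2$, the update becomes $\textit{\textbf{d}}^{k+1}=\textit{\textbf{d}}^k-\alpha_k\sigma(\textit{\textbf{d}}^k)$, and a direct expansion yields
\begin{equation*}
\|\textit{\textbf{d}}^{k+1}-\textit{\textbf{d}}^{\star}\|_2^2=\|\textit{\textbf{d}}^k-\textit{\textbf{d}}^{\star}\|_2^2-2\alpha_k\langle\sigma(\textit{\textbf{d}}^k),\textit{\textbf{d}}^k-\textit{\textbf{d}}^{\star}\rangle+\alpha_k^2\|\sigma(\textit{\textbf{d}}^k)\|_2^2.
\end{equation*}
The quadratic term collapses to $\|\varepsilon(\textit{\textbf{d}}^k)\|_2^4/\|\sigma(\textit{\textbf{d}}^k)\|_2^2$ upon substituting $\alpha_k$. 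Everything then reduces to proving the key cross-term bound
\begin{equation*}
\langle\sigma(\textit{\textbf{d}}^k),\,\textit{\textbf{d}}^k-\textit{\textbf{d}}^{\star}\rangle\;\ge\;\|\varepsilon(\textit{\textbf{d}}^k)\|_2^2,
\end{equation*}
since when multiplied by $-2\alpha_k$ this contributes $-2\|\varepsilon(\textit{\textbf{d}}^k)\|_2^4/\|\sigma(\textit{\textbf{d}}^k)\|_2^2$, which together with the quadratic term leaves the stated decrease of $\|\varepsilon(\textit{\textbf{d}}^k)\|_2^4/\|\sigma(\textit{\textbf{d}}^k)\|_2^2$.

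To establish the cross-term bound I will combine two classical ingredients. First, the defining inequality of the metric projection onto the closed convex box $\Omega$: with $u=\textit{\textbf{d}}^k-(\Gamma\textit{\textbf{d}}^k+\textit{\textbf{q}})$ one has $\langle \Phi_{\Omega}(u)-u,\,\Phi_{\Omega}(u)-y\rangle\le 0$ for every $y\in\Omega$. Applied at $y=\textit{\textbf{d}}^{\star}\in\Omega$, and using $\Phi_{\Omega}(u)=\textit{\textbf{d}}^k-\varepsilon(\textit{\textbf{d}}^k)$, this gives
\begin{equation*}
\langle \varepsilon(\textit{\textbf{d}}^k)-(\Gamma\textit{\textbf{d}}^k+\textit{\textbf{q}}),\;\textit{\textbf{d}}^k-\varepsilon(\textit{\textbf{d}}^k)-\textit{\textbf{d}}^{\star}\rangle\ge 0.
\end{equation*}
Second, the variational characterization of the optimum, $\langle \textit{\textbf{d}}-\textit{\textbf{d}}^{\star},\Gamma\textit{\textbf{d}}^{\star}+\textit{\textbf{q}}\rangle\ge 0$ for every $\textit{\textbf{d}}\in\Omega$, will be applied at $\textit{\textbf{d}}=\Phi_{\Omega}(u)\in\Omega$. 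Adding the two inequalities cancels the $\textit{\textbf{q}}$ terms, and then invoking the monotonicity $(\textit{\textbf{d}}^k-\textit{\textbf{d}}^{\star})^{\text{T}}\Gamma(\textit{\textbf{d}}^k-\textit{\textbf{d}}^{\star})\ge 0$ regroups the remaining inner products into exactly $\langle(\Gamma+I)\varepsilon(\textit{\textbf{d}}^k),\textit{\textbf{d}}^k-\textit{\textbf{d}}^{\star}\rangle-\|\varepsilon(\textit{\textbf{d}}^k)\|_2^2\ge 0$, which is the required bound since $\sigma(\textit{\textbf{d}}^k)=(\Gamma+I)\varepsilon(\textit{\textbf{d}}^k)$.

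The main obstacle is establishing the monotonicity of $\Gamma$ that the middle step needs. From its block structure $\Gamma=\bigl[\textit{\textbf{M}},-\textit{\textbf{J}}^{\text{T}};\,\textit{\textbf{J}},\textbf{0}\bigr]$, the symmetric part is block-diagonal $\operatorname{diag}(\textit{\textbf{M}},\textbf{0})$ because the off-diagonal blocks cancel under symmetrization; monotonicity of $\Gamma$ therefore reduces to positive semi-definiteness of $\textit{\textbf{M}}$, which holds for each of the MKE, RMP and MVN choices introduced earlier, since $\textit{\textbf{M}}$ is in those cases either an inertia-type matrix or the identity. The secondary care point is the bookkeeping itself: the addition of the projection inequality and the variational-inequality characterization, followed by the regrouping into $\Gamma+I$ acting on $\varepsilon(\textit{\textbf{d}}^k)$, must be carried out so that the monotonicity of $\Gamma$ is used in exactly the right place and no spurious boundary terms survive. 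Once the cross-term bound is secured, substituting back into the expansion in the first paragraph delivers the claim.
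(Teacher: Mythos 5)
First, be aware that the paper does not actually prove this lemma: it is imported verbatim by citation from Ref.~\cite{article.9}, so your argument is a reconstruction of the standard contraction analysis for the LVI-based projection method rather than something to be matched against an in-paper derivation. Your overall strategy is the classical and correct one: expand the square under the update rule, reduce everything to the cross-term bound $\langle\sigma(\textit{\textbf{d}}^k),\textit{\textbf{d}}^k-\textit{\textbf{d}}^{\star}\rangle\ge\|\varepsilon(\textit{\textbf{d}}^k)\|_2^2$, and obtain that bound by adding the projection inequality at $y=\textit{\textbf{d}}^{\star}$ to the variational characterization of $\textit{\textbf{d}}^{\star}$ evaluated at $\Phi_{\Omega}(u)$, then invoke monotonicity of $\Gamma$. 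Your identification of the symmetric part of $\Gamma$ as $\operatorname{diag}\big((\textit{\textbf{M}}+\textit{\textbf{M}}^{\text{T}})/2,\textbf{0}\big)$, and hence the reduction of monotonicity to positive semi-definiteness of $\textit{\textbf{M}}$, is also correct.

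The one genuine gap is in your final regrouping. Writing $w=\textit{\textbf{d}}^k-\textit{\textbf{d}}^{\star}$ and $\varepsilon=\varepsilon(\textit{\textbf{d}}^k)$, adding your two inequalities gives $\langle w-\varepsilon,\ \varepsilon-\Gamma w\rangle\ge0$, i.e. $w^{\text{T}}\varepsilon+\varepsilon^{\text{T}}\Gamma w\ge\|\varepsilon\|_2^2+w^{\text{T}}\Gamma w\ge\|\varepsilon\|_2^2$. The middle term is $\varepsilon^{\text{T}}\Gamma w=w^{\text{T}}\Gamma^{\text{T}}\varepsilon$, so what you actually obtain is $\langle(\Gamma^{\text{T}}+I)\varepsilon,\,w\rangle\ge\|\varepsilon\|_2^2$, not $\langle(\Gamma+I)\varepsilon,\,w\rangle\ge\|\varepsilon\|_2^2$. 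Since $\Gamma=[\textit{\textbf{M}},-\textit{\textbf{J}}^{\text{T}};\textit{\textbf{J}},\textbf{0}]$ is genuinely asymmetric, this is not cosmetic: with the paper's stated direction $\sigma=(\Gamma+I)\varepsilon$ the cross-term bound does not follow from your two ingredients. In He's projection--contraction method, and in the Zhang--Zhang literature this lemma is drawn from, the search direction is defined with the transpose, $\sigma=(\Gamma^{\text{T}}+I)\varepsilon$, and with that definition your argument closes exactly as you describe. You should either carry the transpose through explicitly (noting that the paper's definition of $\sigma$ appears to contain a typo) or supply a separate argument for the untransposed direction; as written, your last step silently replaces $\Gamma^{\text{T}}$ by $\Gamma$.
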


This lemma is applied to all $\textit{\textbf{d}}^{\star}\in\Omega^{\star}$. The solution vector $\textit{\textbf{d}}^{\star}$ comes from $\{\textit{\textbf{d}}^k\}$, and its first $2n$ elements consist of the optimal solution to MGDDF (\ref{eq32})-(\ref{eq34}), $\Theta^{\star}\in \mathbb{R}^n$. Definitely, the first $n$ elements constitute the optimal solutions of the left-arm joints, while the second $n$ elements constitute right-arm joints' optimal solution.

\begin{corollary}
For any positive constant $\Xi>0$, error vector $\epsilon\in\mathbb{R}^n$ of a dynamic system $\dot{\epsilon}=-\Xi\epsilon$, starting from any initial state $\epsilon(0)$, converges to zero with exponential convergence speed, where $\dot{\epsilon}$ is the first-order derivative of $\epsilon$ with respect to time $t$.
\end{corollary}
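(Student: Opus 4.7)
The plan is to treat this as a textbook result about linear first-order autonomous ODEs, since the system $\dot{\epsilon}=-\Xi\epsilon$ has a constant gain and is decoupled coordinate-wise when $\Xi$ is a positive scalar. Two complementary routes are available: direct integration of the ODE, and a Lyapunov argument that makes the convergence rate explicit. I would present both in short order, because each clarifies a different aspect of the claim (existence of the solution versus exponential decay bound).

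First, I would observe that the unique solution of $\dot{\epsilon}=-\Xi\epsilon$ with initial state $\epsilon(0)$ is obtained by componentwise integration (equivalently, by the matrix exponential of $-\Xi t I$), yielding
\begin{equation}
\epsilon(t)=e^{-\Xi t}\epsilon(0).
\end{equation}
Since $\Xi>0$, the scalar factor $e^{-\Xi t}$ decreases monotonically to $0$ as $t\to\infty$, so each coordinate of $\epsilon(t)$ decays to zero, and uniformly so in the initial direction.

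To quantify the speed and corroborate the first step, I would introduce the Lyapunov candidate $V(\epsilon)=\tfrac{1}{2}\|\epsilon\|_2^{2}$. Differentiating along trajectories of the system yields $\dot{V}=\epsilon^{\text{T}}\dot{\epsilon}=-\Xi\|\epsilon\|_2^{2}=-2\Xi V$, and Gr\"onwall's inequality then gives $V(t)=V(0)e^{-2\Xi t}$, i.e.\ $\|\epsilon(t)\|_2=\|\epsilon(0)\|_2\, e^{-\Xi t}$. This is exactly the definition of exponential convergence to the origin with rate constant $\Xi$, starting from an arbitrary initial state $\epsilon(0)$.

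There is no real obstacle in this proof: the result follows immediately from linearity, time-invariance, and the sign of the gain. The only minor point worth flagging is that if $\Xi$ is ever interpreted as a positive-definite matrix rather than a positive scalar, the same argument goes through by diagonalizing $\Xi$ (or by using $\dot{V}\leqslant -2\lambda_{\min}(\Xi)V$), with the exponential rate replaced by $\lambda_{\min}(\Xi)$; the structure of the proof is unchanged.
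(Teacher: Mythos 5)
Your proof is correct and complete. Note, however, that the paper states this corollary without supplying any proof of its own, so there is no argument in the source to compare against line by line. Both of your routes are valid: the explicit solution $\epsilon(t)=e^{-\Xi t}\epsilon(0)$ settles the claim immediately by linearity and time-invariance, and the Lyapunov computation $\dot{V}=-2\Xi V$ (which here is an exact equality, so you do not even need Gr\"onwall --- direct integration suffices) gives the quantitative rate $\|\epsilon(t)\|_2=\|\epsilon(0)\|_2\,e^{-\Xi t}$. Your second route is the one most in the spirit of the paper: the authors prove their Theorem on the recurrent-network solver by exactly this template, taking $\mathcal{V}=\|\cdot\|_2^2$ as a Lyapunov candidate, bounding $\dot{\mathcal{V}}\leq-\mathcal{S}\mathcal{V}$, and concluding $\mathcal{V}=O(e^{-\mathcal{S}(t-t_0)})$. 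Your closing remark about the positive-definite-matrix case (rate $\lambda_{\min}(\Xi)$) is a sensible generalization but is not needed for the statement as written, where $\Xi$ is a positive scalar.
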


\begin{corollary}
For any positive constant $\Xi>0$ and disturbance $\Delta\Xi>-\Xi$, a dynamic system with disturbance parameter $\dot{\epsilon}=-(\Xi+\Delta\Xi)\epsilon$ is robust for the uncertain-parameter situation, where $\dot{\epsilon}$ is the first-order derivative of $\epsilon$ with respect to time $t$.
\end{corollary}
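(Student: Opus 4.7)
The plan is to reduce the disturbed system to the undisturbed form already covered by the preceding corollary, and then translate exponential convergence of the error trajectory into the notion of robustness as used in the paper. Concretely, I would set $\Xi' := \Xi + \Delta\Xi$ and observe that the hypothesis $\Delta\Xi > -\Xi$ is exactly the condition $\Xi' > 0$. This turns the perturbed dynamics $\dot{\epsilon} = -(\Xi+\Delta\Xi)\epsilon$ into $\dot{\epsilon} = -\Xi'\epsilon$, which is precisely the form handled by the previous corollary.

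First, I would solve the scalar-coefficient linear ODE explicitly. Integrating componentwise from the initial state $\epsilon(0)$ yields
\begin{equation}
\epsilon(t) = e^{-(\Xi+\Delta\Xi)t}\epsilon(0),
\end{equation}
so that $\|\epsilon(t)\|_2 = e^{-\Xi' t}\|\epsilon(0)\|_2$. Since $\Xi' > 0$, the norm decays exponentially to zero for every admissible value of $\Delta\Xi$ in the open half-line $(-\Xi,\infty)$. Next, I would reinforce the conclusion with a Lyapunov argument for completeness: taking $V(\epsilon) = \tfrac{1}{2}\epsilon^{\text{T}}\epsilon$, direct differentiation along the trajectories gives $\dot{V} = -(\Xi+\Delta\Xi)\epsilon^{\text{T}}\epsilon = -2\Xi' V$, which is strictly negative whenever $\epsilon\neq 0$, confirming asymptotic (indeed exponential) stability of the equilibrium $\epsilon = 0$.

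Finally, I would interpret this as robustness in the sense meant in the paper: the convergence rate $\Xi'$ depends continuously on $\Delta\Xi$, and the equilibrium is preserved for \emph{every} parameter disturbance lying in the admissible range $\Delta\Xi > -\Xi$. In particular, no matter how the true coefficient differs from the nominal $\Xi$, as long as the perturbation does not flip its sign the error trajectory still converges exponentially to zero, which is the standard meaning of robustness to uncertain parameters used in the MGDDF analysis.

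There is no substantial technical obstacle here; the statement is essentially a one-line consequence of the explicit solution of a scalar linear ODE combined with the preceding corollary. The only point that requires any care is a careful justification of the admissibility range $\Delta\Xi > -\Xi$ as the sharp boundary for robustness, since at $\Delta\Xi = -\Xi$ the system degenerates to $\dot{\epsilon} = 0$ (losing convergence) and for $\Delta\Xi < -\Xi$ the trajectory diverges, so the stated assumption is both necessary and sufficient.
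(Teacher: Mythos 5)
The paper states this corollary without any proof at all, so there is no in-text argument to compare against; it is evidently intended to follow from the preceding corollary by exactly the substitution you perform. Your argument is correct: the hypothesis $\Delta\Xi>-\Xi$ is precisely $\Xi':=\Xi+\Delta\Xi>0$, and either the explicit solution $\epsilon(t)=e^{-\Xi' t}\epsilon(0)$ or the Lyapunov computation $\dot{V}=-2\Xi' V$ with $V=\tfrac{1}{2}\epsilon^{\mathrm{T}}\epsilon$ yields exponential convergence of the error to zero uniformly over the admissible disturbance range, which is the robustness being claimed. Your remark that the threshold is sharp (stagnation at $\Delta\Xi=-\Xi$, divergence for $\Delta\Xi<-\Xi$) goes beyond what the paper records and is a worthwhile clarification; the only caveat is interpretive rather than technical, namely that the paper never formally defines ``robust,'' so your reading of it as preservation of exponential convergence for every admissible constant perturbation is an interpretation, albeit the natural one consistent with how the corollary is used.
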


According to dual theory and Lagrange theory \cite{NN29,NN30}, the proposed MGDDF (\ref{eq32})-(\ref{eq34}) can also be solved by a neural network solver. To find a primal dual equilibrium vector $\mathcal{K}\in\mathbb{R}^{2m}$, MGDDF (\ref{eq32})-(\ref{eq34}) should be converted into the following linear variational inequalities, i.e.,
\begin{equation}
\begin{bmatrix}
\dot{\Theta}^*\\
\mathcal{K}^*
\end{bmatrix}\in\Omega
:=\bigg\{
\begin{bmatrix}
\dot{\Theta}^*\\
\mathcal{K}^*
\end{bmatrix}
~\bigg|~
\begin{bmatrix}
\tilde{\zeta}_{\emph{i}}^-(t)\\
-z\textit{\textbf{l}}_{\mathcal{K}}
\end{bmatrix}
\leq
\begin{bmatrix}
\dot{\Theta}^*\\
\mathcal{K}^*
\end{bmatrix}
\leq
\begin{bmatrix}
\tilde{\zeta}_{\emph{i}}^+(t)\\
+z\textit{\textbf{l}}_{\mathcal{K}}
\end{bmatrix}\bigg\}\subset\mathbb{R}^{2n+2m}
\end{equation}
where $\textit{\textbf{l}}_{\mathcal{K}}=[\textbf{1}, \cdots, \textbf{1}]^\text{T}$ and $z$ is set to approximate $m$-dimensional $+\infty$. Such that
\begin{equation}
\bigg(
\begin{bmatrix}
\dot{\Theta}\\
\mathcal{K}
\end{bmatrix}-
\begin{bmatrix}
\dot{\Theta}^*\\
\mathcal{K}^*
\end{bmatrix}
\bigg)^{\text{T}}
\bigg(\begin{bmatrix}
\mathcal{I} &-\textit{\textbf{J}}^{\text{T}}\\
\textit{\textbf{J}} &\textbf{0}
\end{bmatrix}
\begin{bmatrix}
\dot{\Theta}^*\\
\mathcal{K}^*
\end{bmatrix}+
\begin{bmatrix}
\textit{\textbf{c}}^\text{T}\Theta\\
-\dot{\Upsilon}-\wp(\Upsilon-\Im(\Theta))
\end{bmatrix}
\bigg)\geq0, \forall \begin{bmatrix}
\dot{\Theta}\\
\mathcal{K}
\end{bmatrix}\in\Omega
\end{equation}
where $\mathcal{I}\in\mathbb{R}^{2n\times2n}$. According to Ref. \cite{myarticle.21}, the above relationship is equivalent to a piecewise-linear-equation system satisfy
\begin{equation}
\mathcal{P}_{\Omega}\bigg(
\begin{bmatrix}
\dot{\Theta}\\
\mathcal{K}
\end{bmatrix}-
\big(\begin{bmatrix}
\mathcal{I} &-\textit{\textbf{J}}^{\text{T}}\\
\textit{\textbf{J}} &\textbf{0}
\end{bmatrix}
\begin{bmatrix}
\dot{\Theta}\\
\mathcal{K}
\end{bmatrix}+
\begin{bmatrix}
\textit{\textbf{c}}^\text{T}\Theta\\
-\dot{\Upsilon}-\wp(\Upsilon-\Im(\Theta))
\end{bmatrix}\big)\bigg)-
\begin{bmatrix}
\dot{\Theta}\\
\mathcal{K}
\end{bmatrix}=0
\label{equ.nn-1}
\end{equation}
where $\mathcal{P}_{\Omega}(\cdot):\mathbb{R}^{2n+2m}\rightarrow\Omega$ is the projection operator.

With projective theory, system (\ref{equ.nn-1}) can be computed by the recurrent neural network below
\begin{equation}
\begin{bmatrix}
\ddot{\Theta}\\
\dot{\mathcal{K}}
\end{bmatrix}
=\mathcal{H}\bigg(\mathcal{E}+
\begin{bmatrix}
\mathcal{I} &\textit{\textbf{J}}\\
-\textit{\textbf{J}}^{\text{T}} &\textbf{0}
\end{bmatrix}\bigg)
\cdot
\bigg\{
\mathcal{P}_{\Omega}\bigg(
\begin{bmatrix}
\dot{\Theta}\\
\mathcal{K}
\end{bmatrix}-
\big(\begin{bmatrix}
\mathcal{I} &-\textit{\textbf{J}}^{\text{T}}\\
\textit{\textbf{J}} &\textbf{0}
\end{bmatrix}
\begin{bmatrix}
\dot{\Theta}\\
\mathcal{K}
\end{bmatrix}+
\begin{bmatrix}
\textit{\textbf{c}}^\text{T}\Theta\\
-\dot{\Upsilon}-\wp(\Upsilon-\Im(\Theta))
\end{bmatrix}\big)\bigg)-
\begin{bmatrix}
\dot{\Theta}\\
\mathcal{K}
\end{bmatrix}
\bigg\}
\label{equ.nn-2}
\end{equation}
where $\mathcal{H}>0$ is designed to scale convergent rate and $\mathcal{E}\in\mathbb{R}^{(2n+2m)\times(2n+2m)}$ denotes identity matrix.

\begin{theorem}
State vector $[\dot{\Theta}, \mathcal{K}]^{\text{T}}$ of (\ref{equ.nn-2}), starting from any initial state, can globally convergent to an equilibrium point $[\dot{\Theta}^*, \mathcal{K}^*]^{\text{T}}$, of which the first $2n$ elements constitute optimal solution $\dot{\Theta}^*$ to MGDDF (\ref{eq32})-(\ref{eq34}). Moreover, with constant $\mathcal{N}>0$ such that
\begin{equation}
\bigg\|~
\begin{bmatrix}
\ddot{\Theta}\\
\dot{\mathcal{K}}
\end{bmatrix}
-
\mathcal{P}_{\Omega}\bigg(
\begin{bmatrix}
\dot{\Theta}\\
\mathcal{K}
\end{bmatrix}-
\big(\begin{bmatrix}
\mathcal{I} &-\textit{\textbf{J}}^{\text{T}}\\
\textit{\textbf{J}} &\textbf{0}
\end{bmatrix}
\begin{bmatrix}
\dot{\Theta}\\
\mathcal{K}
\end{bmatrix}+
\begin{bmatrix}
\textit{\textbf{c}}^\text{T}\Theta\\
-\dot{\Upsilon}-\wp(\Upsilon-\Im(\Theta))
\end{bmatrix}\big)\bigg)
~\bigg\|_2^2
\geq
\mathcal{N}~\bigg\|~\begin{bmatrix}
\ddot{\Theta}\\
\dot{\mathcal{K}}
\end{bmatrix}
-
\begin{bmatrix}
\ddot{\Theta}^*\\
\dot{\mathcal{K}}^*
\end{bmatrix}~
\bigg\|_2^2,
\end{equation}
the exponential stability for computing MGDDF (\ref{eq32})-(\ref{eq34}) can be guaranteed.
\end{theorem}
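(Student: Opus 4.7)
The plan is to establish the theorem in two stages: first the equilibrium characterisation together with global convergence, and then the exponential rate under the extra regularity hypothesis. Write $\mathbf{y}=[\dot\Theta^{\text{T}},\mathcal{K}^{\text{T}}]^{\text{T}}$, $\mathbf{y}^*=[\dot\Theta^{*\text{T}},\mathcal{K}^{*\text{T}}]^{\text{T}}$, and let $\Gamma=\begin{bmatrix}\mathcal{I} & -\textit{\textbf{J}}^{\text{T}}\\ \textit{\textbf{J}} & \mathbf{0}\end{bmatrix}$, $\mathbf{q}=[\textit{\textbf{c}}^{\text{T}}\Theta,\,-\dot{\Upsilon}^{\text{T}}-\wp^{\text{T}}(\Upsilon-\Im(\Theta))^{\text{T}}]^{\text{T}}$, so that (\ref{equ.nn-2}) reads $\dot{\mathbf{y}}=\mathcal{H}(\mathcal{E}+\Gamma^{\text{T}})\{\mathcal{P}_{\Omega}(\mathbf{y}-\Gamma\mathbf{y}-\mathbf{q})-\mathbf{y}\}$. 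The first step is to observe that $\dot{\mathbf{y}}=\mathbf{0}$ is equivalent to $\mathbf{y}=\mathcal{P}_{\Omega}(\mathbf{y}-\Gamma\mathbf{y}-\mathbf{q})$ (since $\mathcal{E}+\Gamma^{\text{T}}$ is nonsingular because its symmetric part $\mathcal{E}+\tfrac12(\Gamma+\Gamma^{\text{T}})$ is positive definite in view of $\mathcal{I}\succeq 0$). Hence equilibria coincide with solutions of the projection equation (\ref{equ.nn-1}), and by construction the first $2n$ components of any such $\mathbf{y}^*$ solve MGDDF (\ref{eq32})-(\ref{eq34}).

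The second step is global convergence. I would set up the Lyapunov function
\begin{equation*}
V(\mathbf{y})=\tfrac{1}{2}\|\mathbf{y}-\mathbf{y}^*\|_2^{2}+\tfrac{1}{2}\|\mathbf{y}-\mathcal{P}_{\Omega}(\mathbf{y}-\Gamma\mathbf{y}-\mathbf{q})\|_2^{2},
\end{equation*}
which is a standard choice for projection neural networks solving monotone LVIs. Differentiating along trajectories of (\ref{equ.nn-2}) and invoking the fundamental projection inequality $(\mathbf{u}-\mathcal{P}_{\Omega}\mathbf{u})^{\text{T}}(\mathcal{P}_{\Omega}\mathbf{u}-\mathbf{v})\ge 0$ for every $\mathbf{v}\in\Omega$ (applied with $\mathbf{u}=\mathbf{y}-\Gamma\mathbf{y}-\mathbf{q}$ and $\mathbf{v}=\mathbf{y}^*$), combined with the monotonicity of $\mathbf{y}\mapsto\Gamma\mathbf{y}+\mathbf{q}$ that follows because $\Gamma+\Gamma^{\text{T}}=\mathrm{diag}(2\mathcal{I},\mathbf{0})\succeq \mathbf{0}$, should yield
\begin{equation*}
\dot V(\mathbf{y}) \le -\mathcal{H}\,\|\mathcal{P}_{\Omega}(\mathbf{y}-\Gamma\mathbf{y}-\mathbf{q})-\mathbf{y}\|_2^{2}.
\end{equation*}
Since the right-hand side vanishes only on the equilibrium set $\Omega^{\star}$, LaSalle's invariance principle gives $\mathbf{y}(t)\to\mathbf{y}^*$ from any initial state.

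The third and final step is to upgrade this to exponential convergence. Here I use the hypothesis of the theorem, which says precisely that
\begin{equation*}
\|\mathcal{P}_{\Omega}(\mathbf{y}-\Gamma\mathbf{y}-\mathbf{q})-\mathbf{y}\|_2^{2}\ge \mathcal{N}\,\|\mathbf{y}-\mathbf{y}^*\|_2^{2}
\end{equation*}
(this is a local error-bound, or metric-regularity, condition on the projection residual). Plugging this inequality into the bound on $\dot V$ obtained above gives $\dot V(\mathbf{y})\le -\mathcal{H}\mathcal{N}\,\|\mathbf{y}-\mathbf{y}^*\|_2^{2}\le -\alpha V(\mathbf{y})$ for a suitable constant $\alpha>0$ (using that $V$ is equivalent to $\|\mathbf{y}-\mathbf{y}^*\|_2^{2}$ up to the non-expansiveness of $\mathcal{P}_{\Omega}$, which bounds the residual term by a multiple of $\|\mathbf{y}-\mathbf{y}^*\|_2^{2}$). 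Gr\"onwall's inequality then yields $\|\mathbf{y}(t)-\mathbf{y}^*\|_2\le C\,e^{-\alpha t/2}\|\mathbf{y}(0)-\mathbf{y}^*\|_2$, proving the claimed exponential stability.

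The main obstacle, and the place where one has to be most careful, is the derivative computation of $V$: the residual term $\|\mathbf{y}-\mathcal{P}_{\Omega}(\mathbf{y}-\Gamma\mathbf{y}-\mathbf{q})\|_2^{2}$ is only piecewise smooth because of the projection, so one must differentiate using the non-expansive/co-coercive properties of $\mathcal{P}_{\Omega}$ rather than the chain rule, and then combine them with the monotonicity of $\Gamma$ to cancel the cross terms and leave a manifestly nonpositive right-hand side. Once that computation is pinned down, the convergence conclusions follow in a standard way.
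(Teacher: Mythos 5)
Your proposal follows essentially the same route as the paper's proof: a Lyapunov analysis of the projection network built on the projection variational inequality and the monotonicity of $\Gamma$ (from $\Gamma+\Gamma^{\text{T}}=\mathrm{diag}(2\mathcal{I},\mathbf{0})\succeq\mathbf{0}$), with the hypothesized residual bound $\mathcal{N}$ converting $\dot{\mathcal{V}}\leq 0$ into $\dot{\mathcal{V}}\leq-\mathcal{N}\mathcal{H}\mathcal{V}$ and hence exponential convergence via Gr\"onwall. The only real divergence is the extra residual term you add to the Lyapunov function, which is what creates the non-smoothness obstacle you flag at the end; the paper simply takes $\mathcal{V}=\|\mathbf{y}-\mathbf{y}^*\|_2^2$, for which the derivative computation is clean and the same conclusions follow.
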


\begin{proof}
Firstly, define the following Lyapunov function candidate, i.e.,
\begin{equation}
\mathcal{V}\bigg(
\begin{bmatrix}
\ddot{\Theta}\\
\dot{\mathcal{K}}
\end{bmatrix}
\bigg)=
\bigg\|~\begin{bmatrix}
\ddot{\Theta}\\
\dot{\mathcal{K}}
\end{bmatrix}
-
\begin{bmatrix}
\ddot{\Theta}^*\\
\dot{\mathcal{K}}^*
\end{bmatrix}
~\bigg\|_2^2
\geq0.
\end{equation}

Secondly, the time derivative of $\mathcal{V}(\cdot)$ along (\ref{equ.nn-2}) is
\begin{equation}
\frac{\text{d}\mathcal{V}\bigg(
\begin{bmatrix}
\ddot{\Theta}\\
\dot{\mathcal{K}}
\end{bmatrix}
\bigg)}{\text{d}t}
\end{equation}
\begin{equation}
=
\mathcal{H}
\bigg(
\begin{bmatrix}
\ddot{\Theta}\\
\dot{\mathcal{K}}
\end{bmatrix}
-
\begin{bmatrix}
\ddot{\Theta}^*\\
\dot{\mathcal{K}}^*
\end{bmatrix}
\bigg)^{\text{T}}
\bigg(\mathcal{E}+
\begin{bmatrix}
\mathcal{I} &\textit{\textbf{J}}\\
-\textit{\textbf{J}}^{\text{T}} &\textbf{0}
\end{bmatrix}\bigg)
\times
\bigg\{
\mathcal{P}_{\Omega}\bigg(
\begin{bmatrix}
\dot{\Theta}\\
\mathcal{K}
\end{bmatrix}-
\big(\begin{bmatrix}
\mathcal{I} &-\textit{\textbf{J}}^{\text{T}}\\
\textit{\textbf{J}} &\textbf{0}
\end{bmatrix}
\begin{bmatrix}
\dot{\Theta}\\
\mathcal{K}
\end{bmatrix}+
\begin{bmatrix}
\textit{\textbf{c}}^\text{T}\Theta\\
-\dot{\Upsilon}-\wp(\Upsilon-\Im(\Theta))
\end{bmatrix}\big)\bigg)-
\begin{bmatrix}
\dot{\Theta}\\
\mathcal{K}
\end{bmatrix}
\bigg\}
\end{equation}
\begin{equation}
\leq-\mathcal{H}\bigg\|~
\begin{bmatrix}
\dot{\Theta}\\
\mathcal{K}
\end{bmatrix}
-
\mathcal{P}_{\Omega}\bigg(
\begin{bmatrix}
\dot{\Theta}\\
\mathcal{K}
\end{bmatrix}-
\big(\begin{bmatrix}
\mathcal{I} &-\textit{\textbf{J}}^{\text{T}}\\
\textit{\textbf{J}} &\textbf{0}
\end{bmatrix}
\begin{bmatrix}
\dot{\Theta}\\
\mathcal{K}
\end{bmatrix}+
\begin{bmatrix}
\textit{\textbf{c}}^\text{T}\Theta\\
-\dot{\Upsilon}-\wp(\Upsilon-\Im(\Theta))
\end{bmatrix}\big)\bigg)
~\bigg\|^2_2
-
\mathcal{H}\bigg\|~
\bigg(
\begin{bmatrix}
\ddot{\Theta}\\
\dot{\mathcal{K}}
\end{bmatrix}
-
\begin{bmatrix}
\ddot{\Theta}^*\\
\dot{\mathcal{K}}^*
\end{bmatrix}
\bigg)
~\bigg\|_2^2
\leq0.
\end{equation}

On the basis of Lyapunov theory, with positive-defined $\mathcal{V}$ and negative-dfined $\dot{\mathcal{V}}$, state vector $[\dot{\Theta}, \mathcal{K}]^{\text{T}}$ of (\ref{equ.nn-2}) is stable and converges to equilibrium point $[\dot{\Theta}^*, \mathcal{K}^*]^{\text{T}}$ globally, in view that $\dot{\mathcal{V}}=0$ when $[\dot{\Theta}, \mathcal{K}]^{\text{T}}=[\dot{\Theta}^*, \mathcal{K}^*]^{\text{T}}=0$. It follows that the first $2n$ elements of $[\dot{\Theta}^*, \mathcal{K}^*]^{\text{T}}$ constitute the optimal solution to MGDDF (\ref{eq32})-(\ref{eq34}). Also note that
\begin{equation}
\frac{\text{d}\mathcal{V}\bigg(
\begin{bmatrix}
\ddot{\Theta}\\
\dot{\mathcal{K}}
\end{bmatrix}
\bigg)}{\text{d}t}
\leq
-
\mathcal{H}
\bigg(
\begin{bmatrix}
\ddot{\Theta}\\
\dot{\mathcal{K}}
\end{bmatrix}
-
\begin{bmatrix}
\ddot{\Theta}^*\\
\dot{\mathcal{K}}^*
\end{bmatrix}
\bigg)^{\text{T}}
\bigg(
\mathcal{N}\mathcal{E}+
\begin{bmatrix}
\mathcal{I} &-\textit{\textbf{J}}^{\text{T}}\\
\textit{\textbf{J}} &\textbf{0}
\end{bmatrix}
\bigg)\cdot
\bigg(
\begin{bmatrix}
\ddot{\Theta}\\
\dot{\mathcal{K}}
\end{bmatrix}
-
\begin{bmatrix}
\ddot{\Theta}^*\\
\dot{\mathcal{K}}^*
\end{bmatrix}
\bigg)
\leq-\mathcal{S}\mathcal{V}\bigg(
\begin{bmatrix}
\ddot{\Theta}\\
\dot{\mathcal{K}}
\end{bmatrix}
\bigg)
\end{equation}
where convergence rate $\mathcal{S}=\mathcal{N}\mathcal{H}>0$. Therefore, with $\forall t\geq t_0$, we have $\mathcal{V}([\dot{\Theta}, \mathcal{K}]^{\text{T}})=O(e^{-\mathcal{S}(t-t_0)})$ such that
\begin{equation}
\bigg\|~
\begin{bmatrix}
\ddot{\Theta}\\
\dot{\mathcal{K}}
\end{bmatrix}
-
\begin{bmatrix}
\ddot{\Theta}^*\\
\dot{\mathcal{K}}^*
\end{bmatrix}
~\bigg\|_2^2
=
O(e^{-\mathcal{S}(t-t_0)/2}).
\label{eqn.nn-3}
\end{equation}

In reference to (\ref{eqn.nn-3}), we can draw the conclusion that the exponential computation speed of MGDDF (\ref{eq32})-(\ref{eq34}) can be achieved. The proof is thus completed.
\end{proof}

\section{Simulations and Experiments}

\begin{figure*}
  \centering
  \includegraphics[width=.75\textwidth]{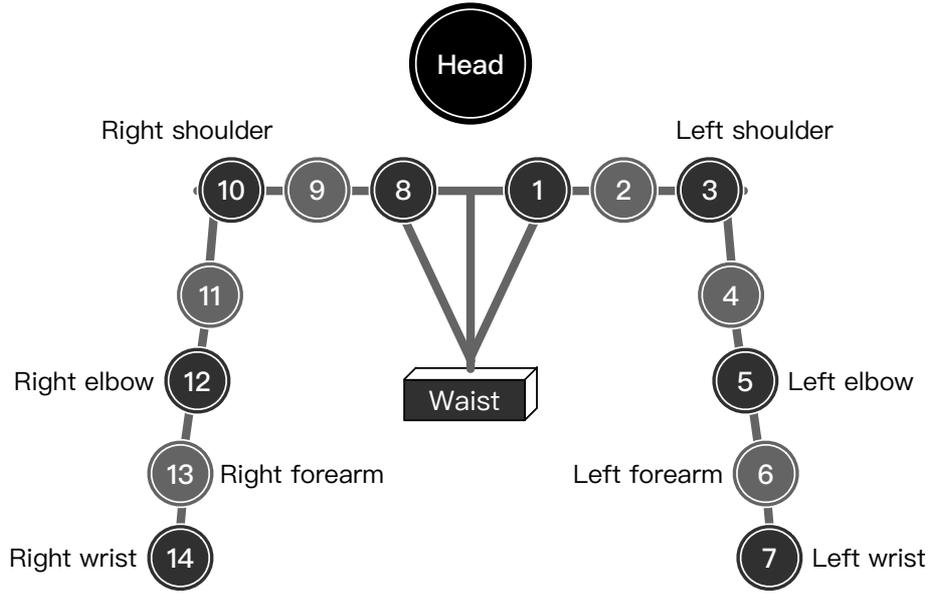}\\
  \caption{The skeleton structure of dual arms of the humanoid robot used in simulations.}\label{fig.structure}
\end{figure*}

In this section, the practicability of the proposed MGDDF scheme is verified by simulative experiments. The comparison between MGDDF and the traditional GDDF scheme is also provided. Specifically, the experiments are based on the dual arms of a humanoid robot, which contain 14 DOF (7 of each arm), and the skeleton structure is shown in Fig. \ref{fig.structure}. The execution task lasts for $18s$. In addition, the upper and lower bounds of the joint limit $6$, $7$, $13$ and $14$ are set as the same value, respectively.

\subsection{Adjustments of the GDDF Scheme}

Set the parameters of the QP problem (\ref{eq32})-(\ref{eq34}) as follows: $N=1$, $\varrho=2$, $\kappa=0.85$, $\Theta_{gL}^-=[0, -54\pi/180,$ $-10.5\pi/180, 0, -131\pi/180, \pi/3, 55\pi/180]^\text{T}$ (rad),
$\Theta_{gL}^+=[9\pi/180, 18\pi/180, 22.5\pi/180, \pi/2, 0, \pi/3,$ $55\pi/180]^\text{T}$ (rad), $\Theta_{gR}^-=[-9\pi/180, -18\pi/180, -22.5\pi$ $/180, \pi/2, -131\pi/180, \pi/3, -25\pi/36]^\text{T}$ (rad), and $\Theta_{gR}^+=$ $[0, 54\pi/180,$ $10.5\pi/180, \pi, 0, \pi/3, -25\pi/36]^\text{T}$ (rad).

At first, to show the adjustment to the traditional GDDF scheme, the inequality (\ref{eq34}) of the QP problem (\ref{eq32})-(\ref{eq34}) is not applied directly. For the GDDF scheme, when the margin is not considered, the upper limit and lower limits would overlap and the result is shown in Fig. \ref{fig.compare adjustment}. Specifically, the black line stands for the joint limit and the green line stands for the limit with margins. From the figure we can see that the movement of the joint is constrained by the limit with margins.
Because of the margins, $\Theta_{g}^-$ and $\Theta_{g}^+$ of joint $7$ is not the same, so we propose that when the limit with margins reaches the goal value, they would hold that value. After adjustment, the joint would not exceed the margins. Fig. \ref{fig.compare adjustment} (b) gives the result of this modification.

\begin{figure}[tbp]
\centering
\subfigure[]{\includegraphics[width=0.49\columnwidth]{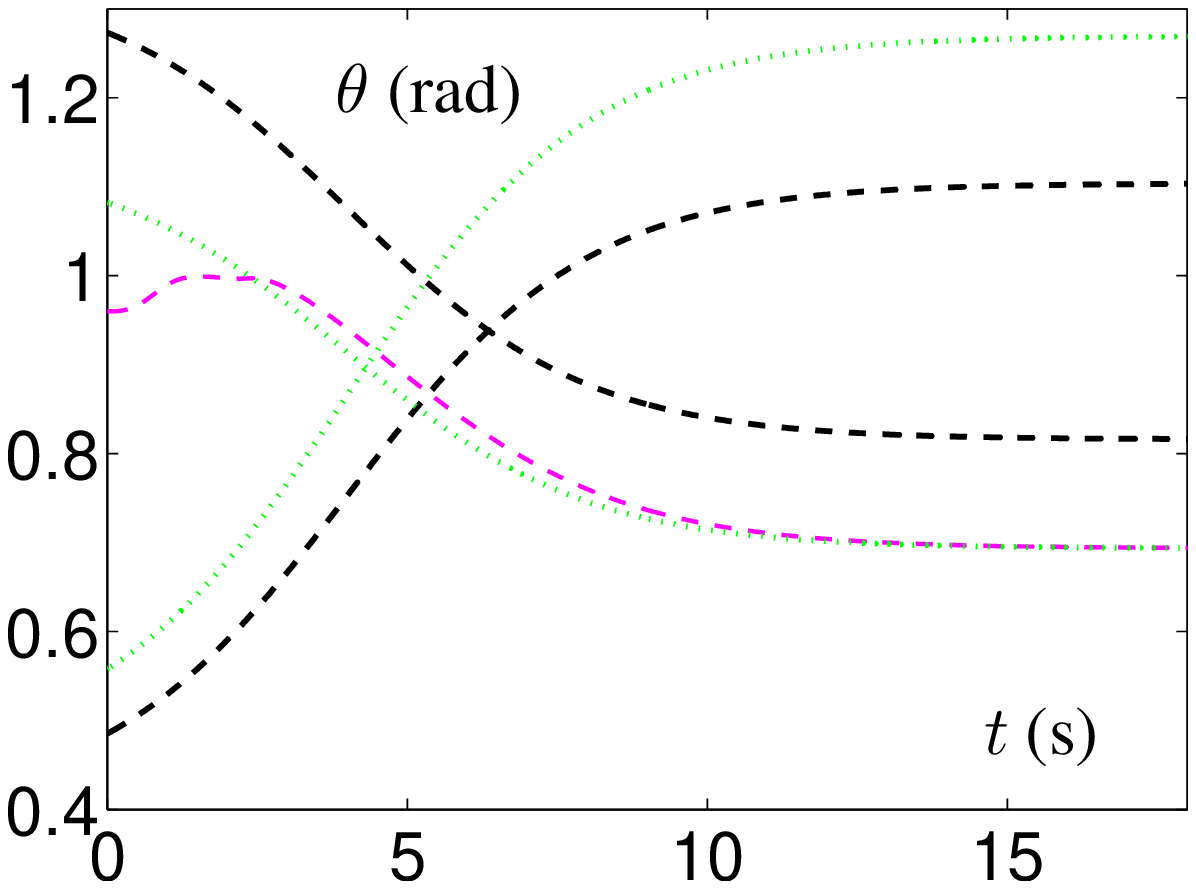}}
\subfigure[]{\includegraphics[width=0.49\columnwidth]{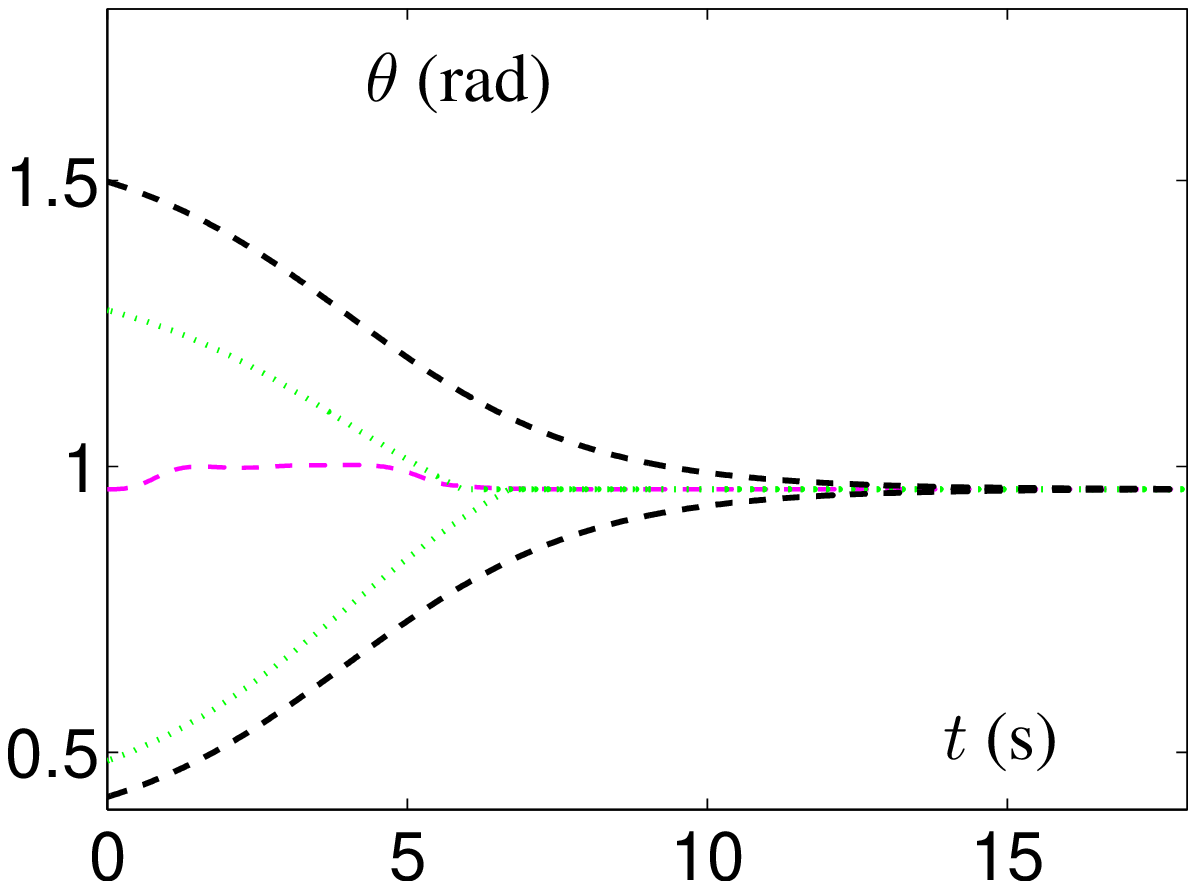}}
\caption{Comparisons between GDDF scheme with non-adjustment-margin and adjustment-margin. (a) Without adjustment. (b) With adjustment. }\label{fig.compare adjustment}
\end{figure}

We use the following equations to simplify, i.e.,
\begin{eqnarray}
\Theta_{set}^+(t)=\Theta^++\frac{\Delta\Theta^+}{1+e^{-(t-\tau)/\varrho}}\\
\Theta_{set}^-(t)=\Theta^-+\frac{\Delta\Theta^-}{1+e^{-(t-\tau)/\varrho}}.
\end{eqnarray}

For joint $6$, $7$ and $13$, the upper limit and lower limit both $>0$, the following equations are introduced, i.e.,
$$ \kappa\tilde{\Theta}^+ =\left\{
\begin{aligned}
&\kappa\Theta_{set}^+(t), && \kappa\tilde{\Theta}^+> \Theta_{g}^+\\
&\Theta_{g}^+, && \kappa\tilde{\Theta}^+ \leqslant \Theta_{g}^+
\end{aligned}
\right.
$$

$$ \kappa\tilde{\Theta}^- =\left\{
\begin{aligned}
&(2-\kappa)\Theta_{set}^-(t), && \kappa\tilde{\Theta}^+< \Theta_{g}^-\\
&\Theta_{g}^-, && \kappa\tilde{\Theta}^-\geqslant \Theta_{g}^-
\end{aligned}
\right.
$$
and function (\ref{fun18}) should be rewritten as
$$ \tilde{\Theta}^+ =\left\{
\begin{aligned}
&\Theta_{set}^+(t), && \tilde{\Theta}^+> \Theta_{g}^+/\kappa\\
&\Theta_{g}^+/\kappa, && \tilde{\Theta}^+ \leqslant \Theta_{g}^+/\kappa
\end{aligned}
\right.
$$

$$ \tilde{\Theta}^- =\left\{
\begin{aligned}
&\Theta_{set}^-(t), && \tilde{\Theta}^-< \Theta_{g}^-/(2-\kappa)\\
&\Theta_{g}^-/(2-\kappa), && \tilde{\Theta}^- \geqslant \Theta_{g}^-/(2-\kappa).
\end{aligned}
\right.
$$

\begin{figure}[tbp]
\centering
\subfigure[]{\includegraphics[width=0.245\columnwidth]{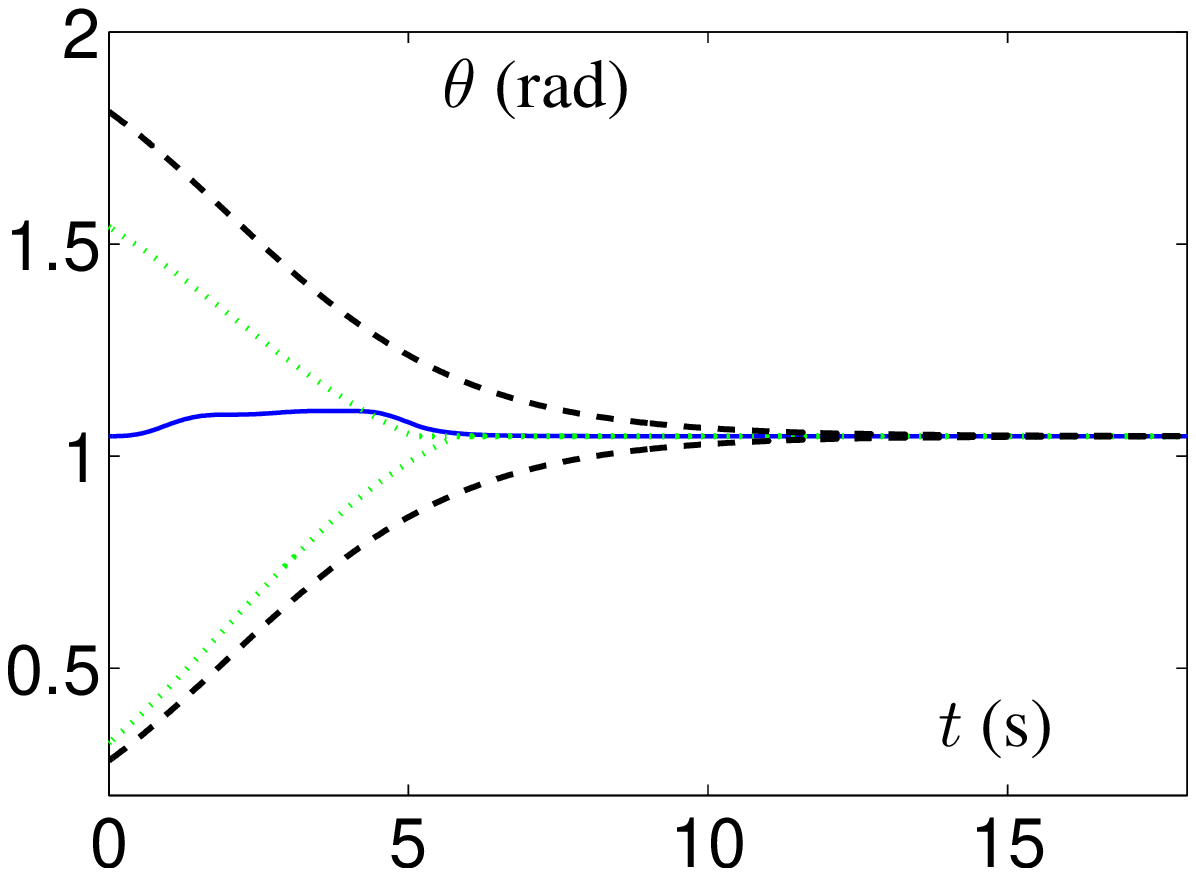}}
\subfigure[]{\includegraphics[width=0.245\columnwidth]{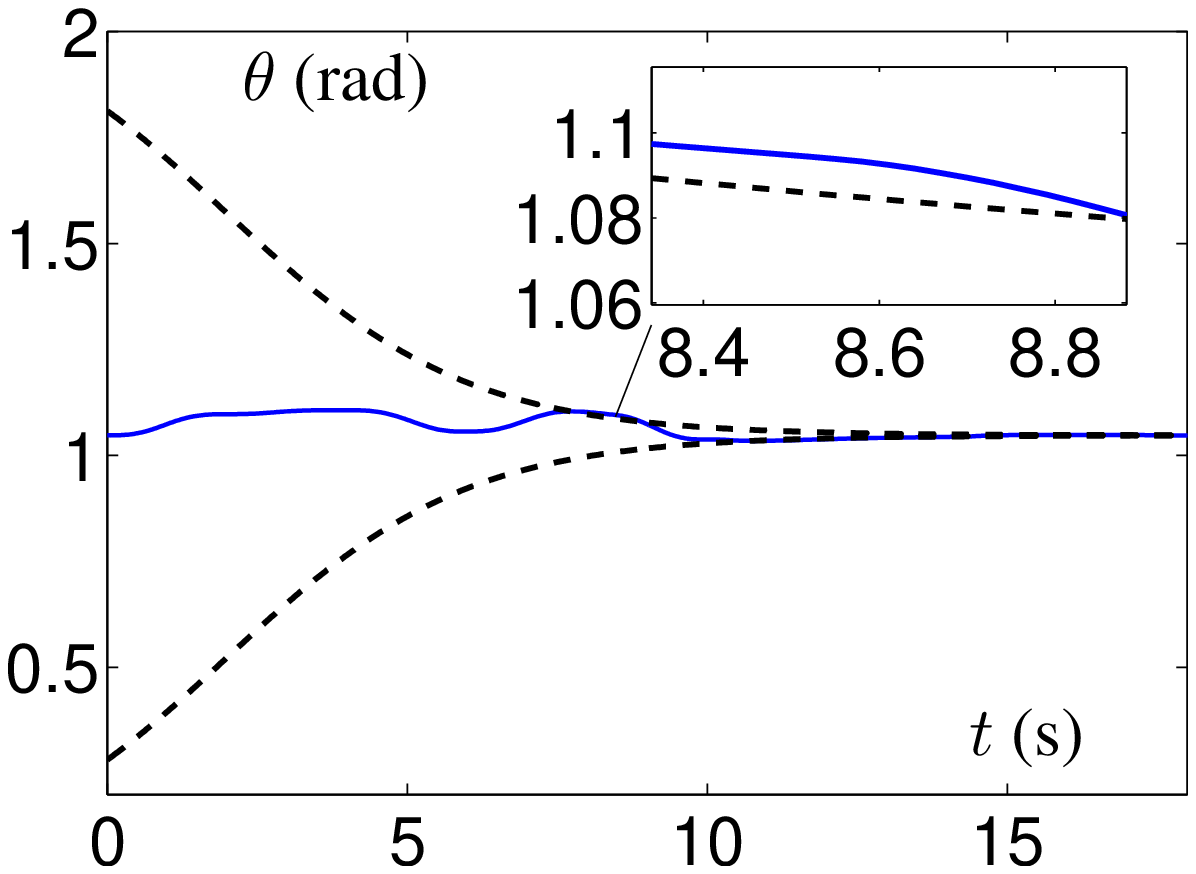}}
\subfigure[]{\includegraphics[width=0.245\columnwidth]{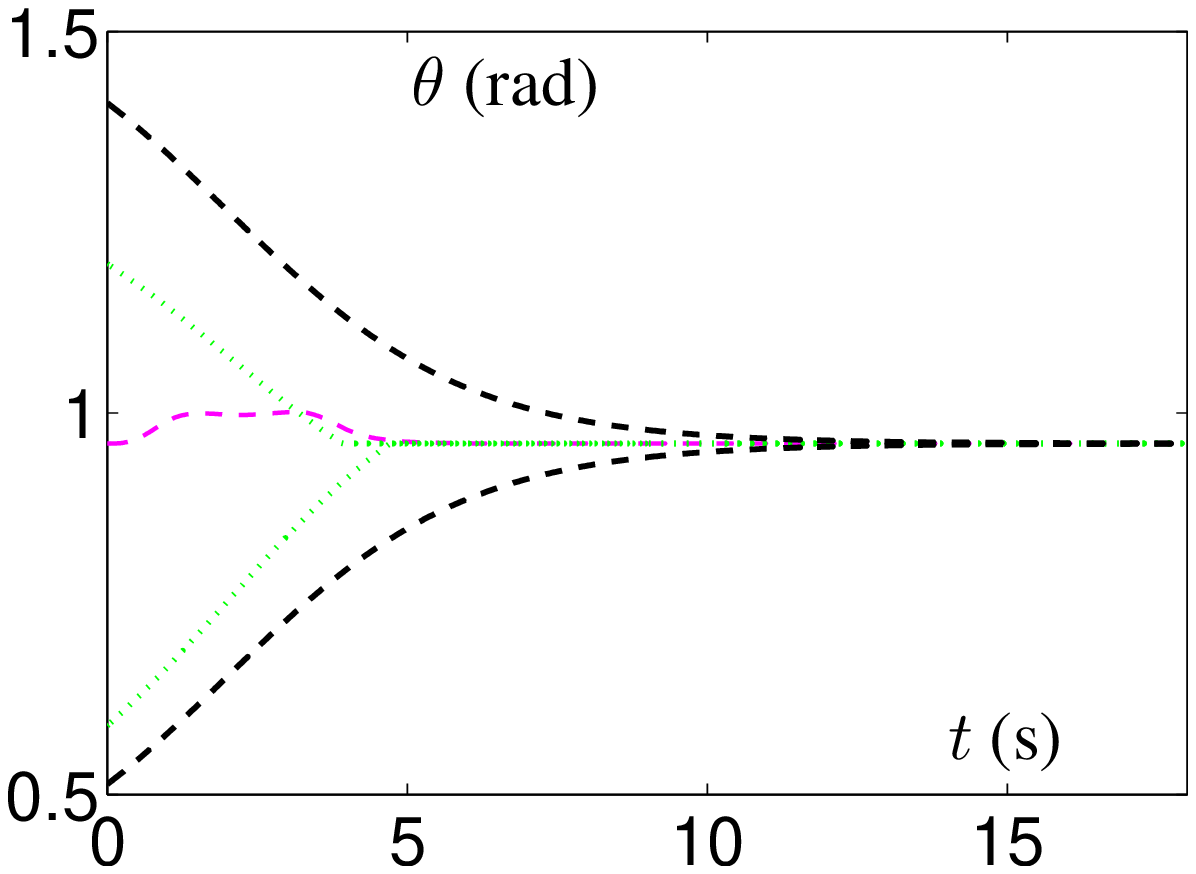}}
\subfigure[]{\includegraphics[width=0.245\columnwidth]{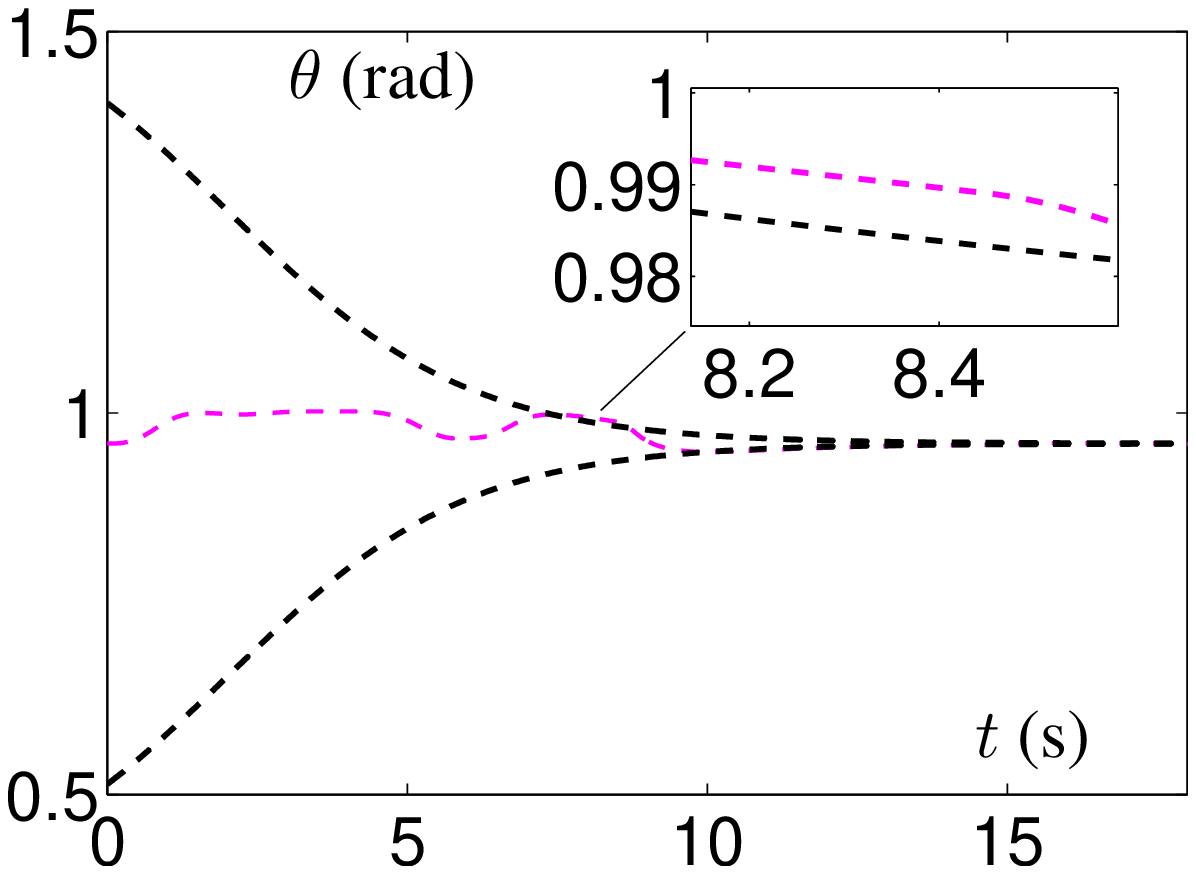}}
\caption{Comparisons between GDDF and MGDDF when $\varrho=2$, $N=2$. (a) Limit of joint $6$ with margin. (b) Limit of joint $6$ without margin. (c) Limit of joint $7$ with margin. (d) Limit of joint $7$ without margin.}\label{fig.c2n2}
\end{figure}

\begin{figure}[tbp]
\centering
\subfigure[]{\includegraphics[width=0.325\columnwidth]{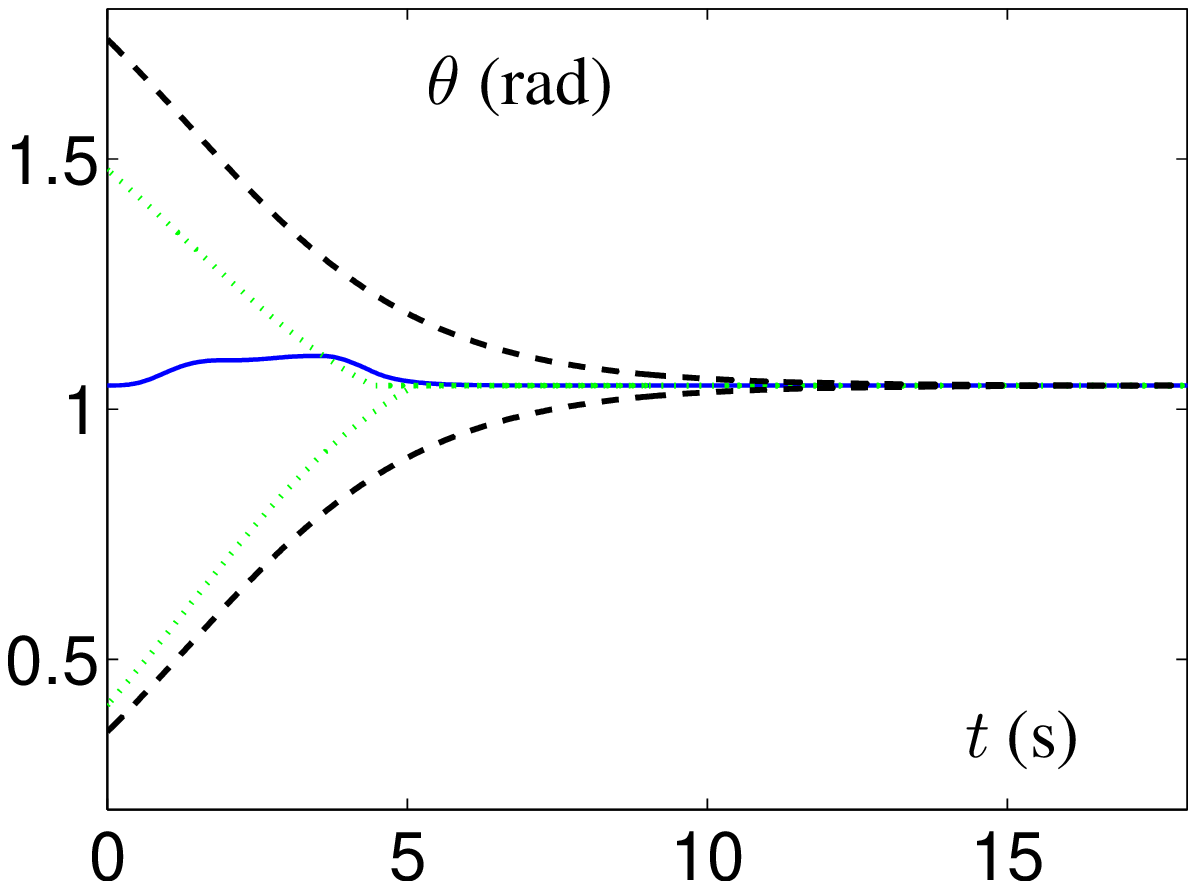}}
\subfigure[]{\includegraphics[width=0.325\columnwidth]{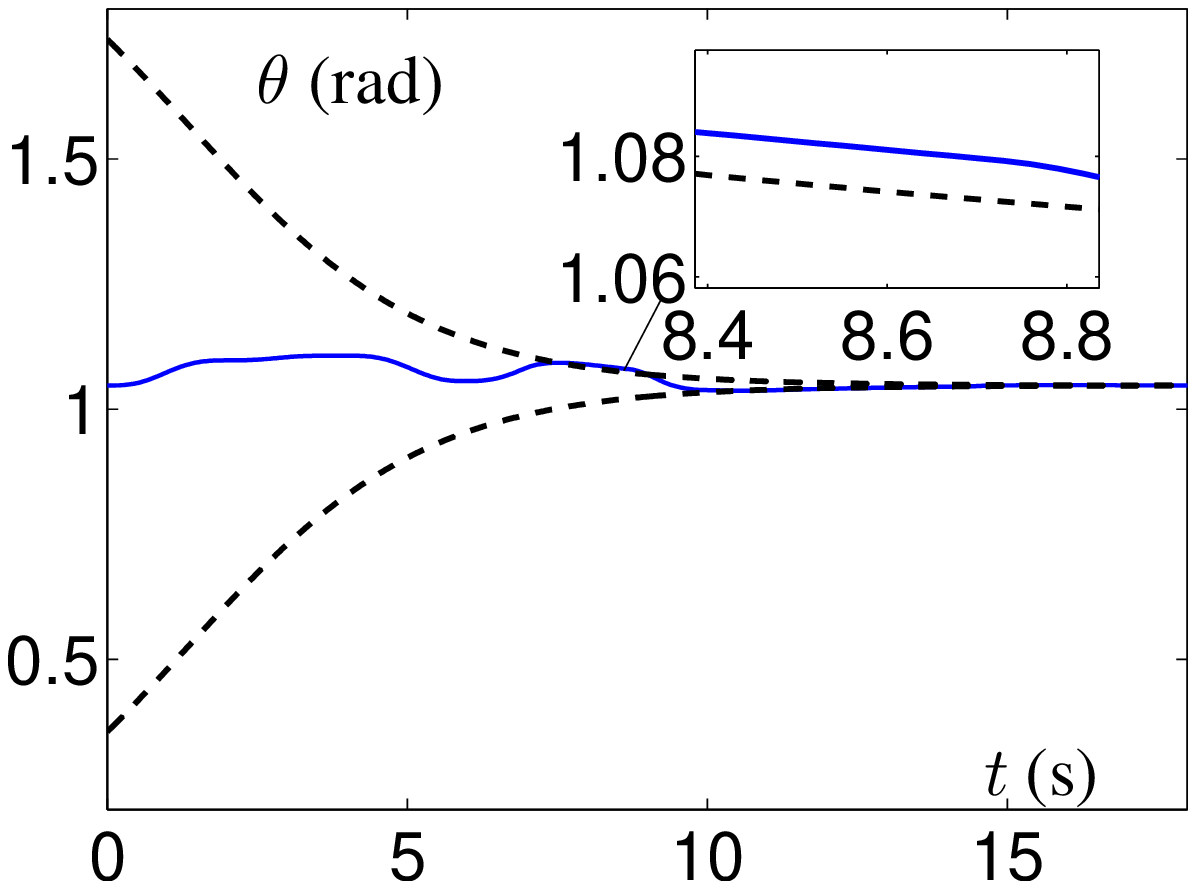}}
\subfigure[]{\includegraphics[width=0.34\columnwidth]{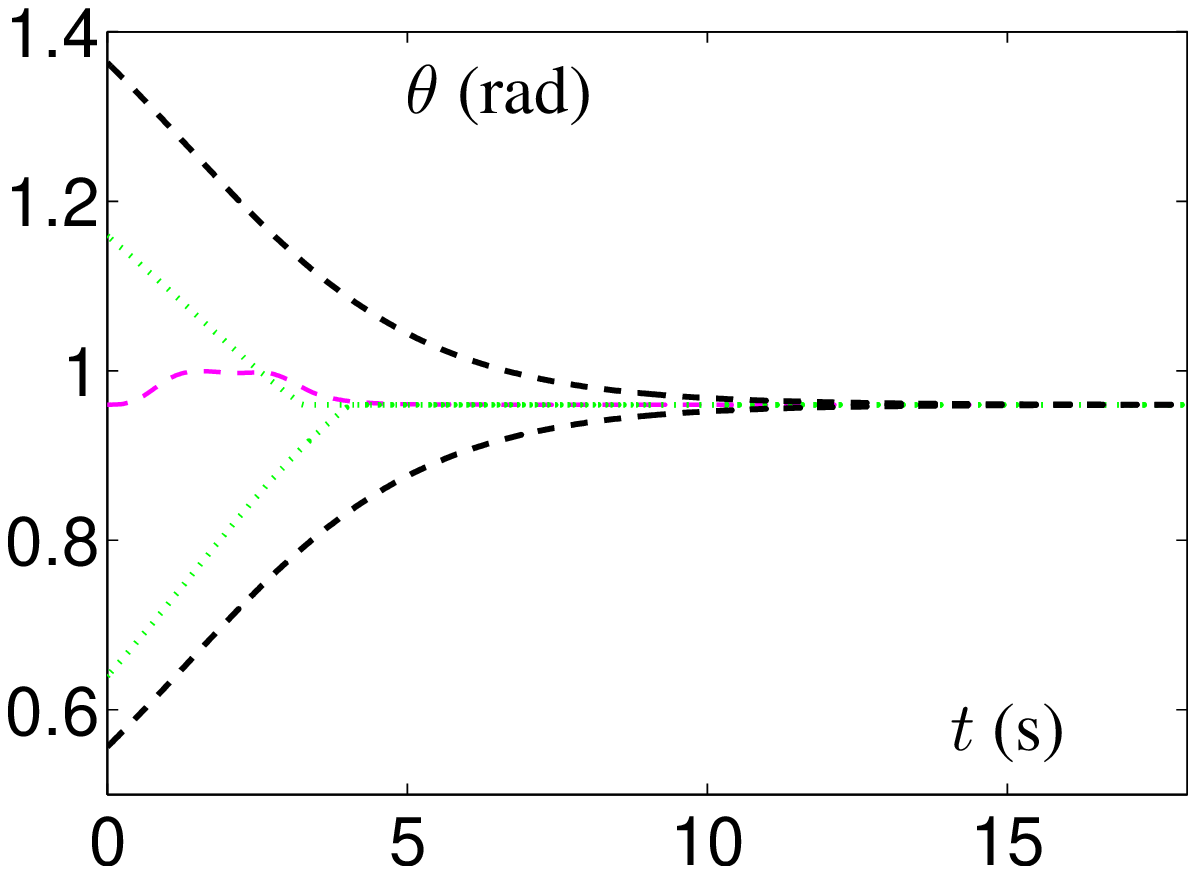}}
\subfigure[]{\includegraphics[width=0.34\columnwidth]{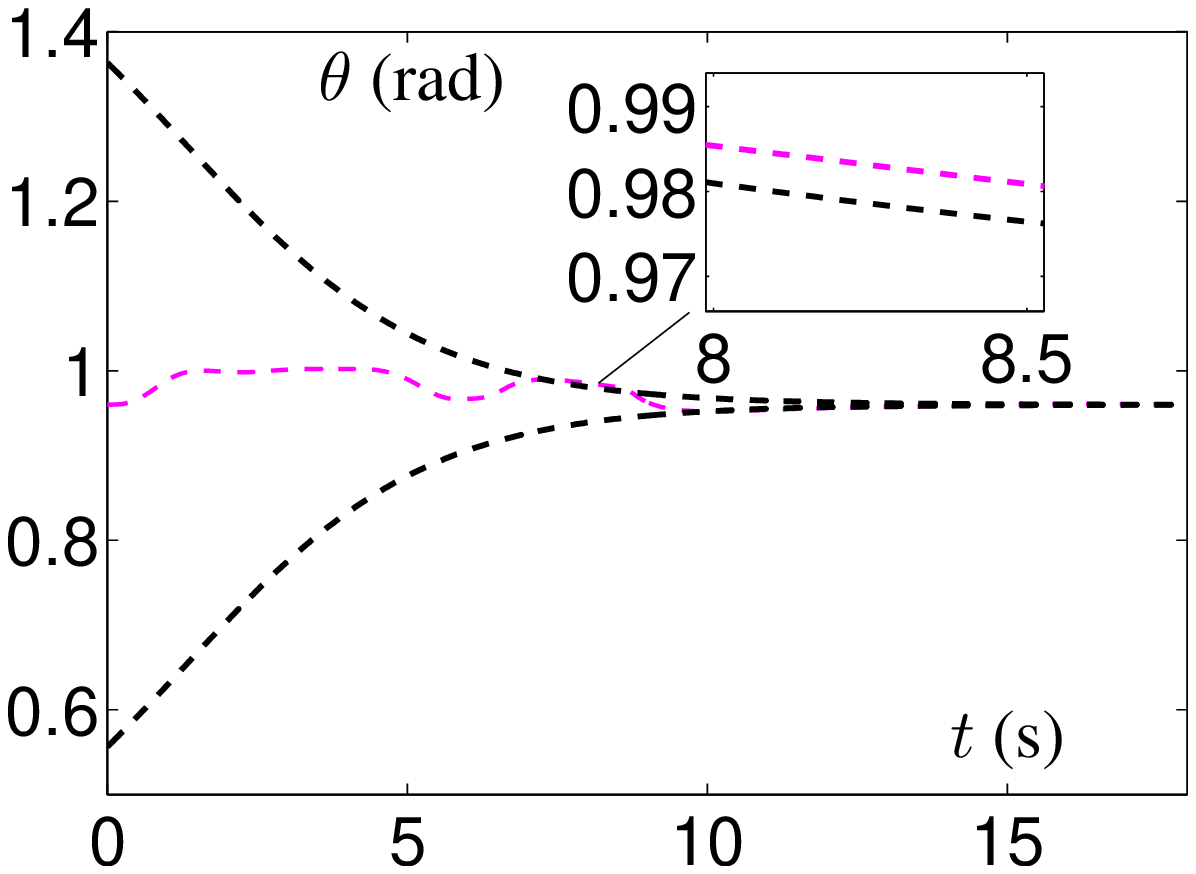}}
\subfigure[]{\includegraphics[width=0.325\columnwidth]{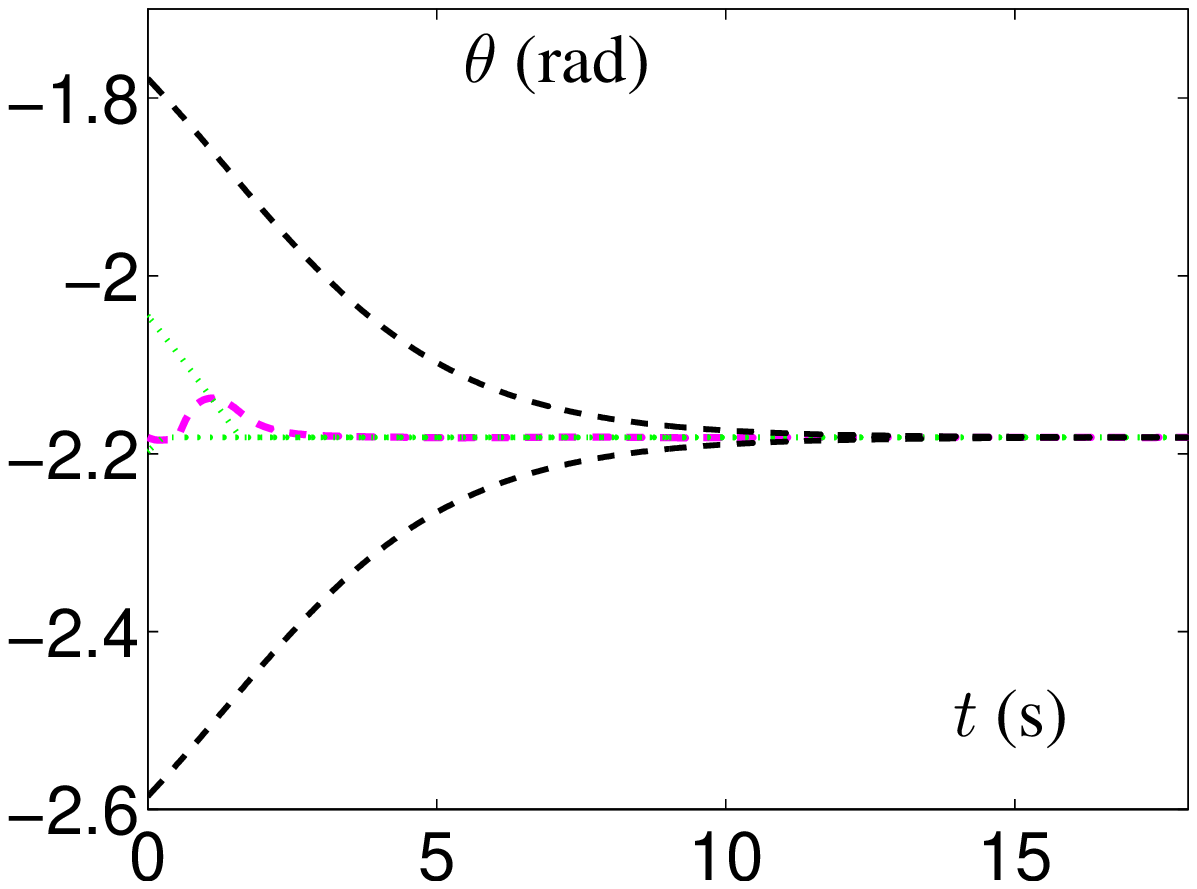}}
\subfigure[]{\includegraphics[width=0.325\columnwidth]{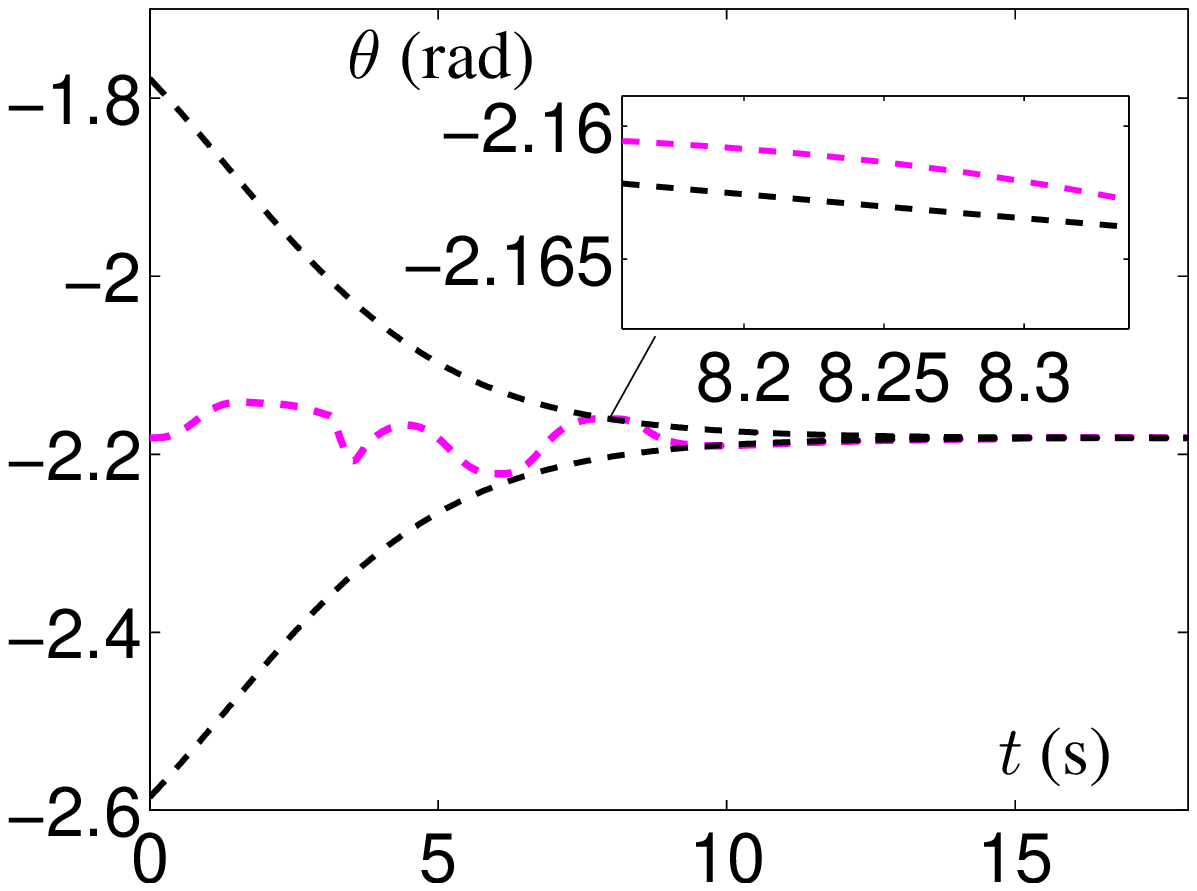}}
\caption{Comparisons between GDDF and MGDDF when $\varrho=2$, $N=3$. (a) Limit of joint $7$ with margin. (b) Limit of joint $6$ without margin. (c) Limit of joint $7$ with margin. (d) Limit of joint $7$ without margin. (e) Limit of Joint $14$ with margin. (d) Limit of joint $14$ without margin.} \label{fig.c2n3}
\end{figure}

\begin{figure}[tbp]
\centering
\subfigure[]{\includegraphics[width=0.325\columnwidth]{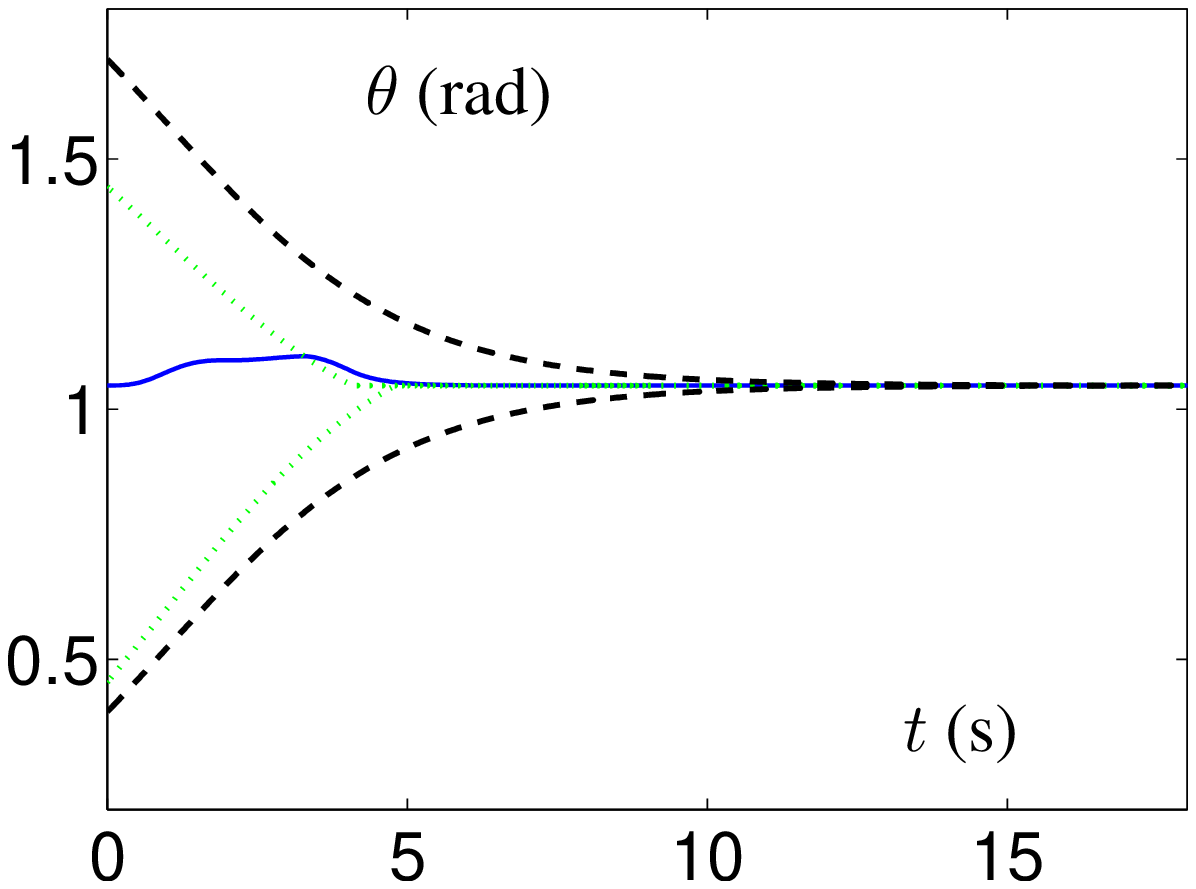}}
\subfigure[]{\includegraphics[width=0.325\columnwidth]{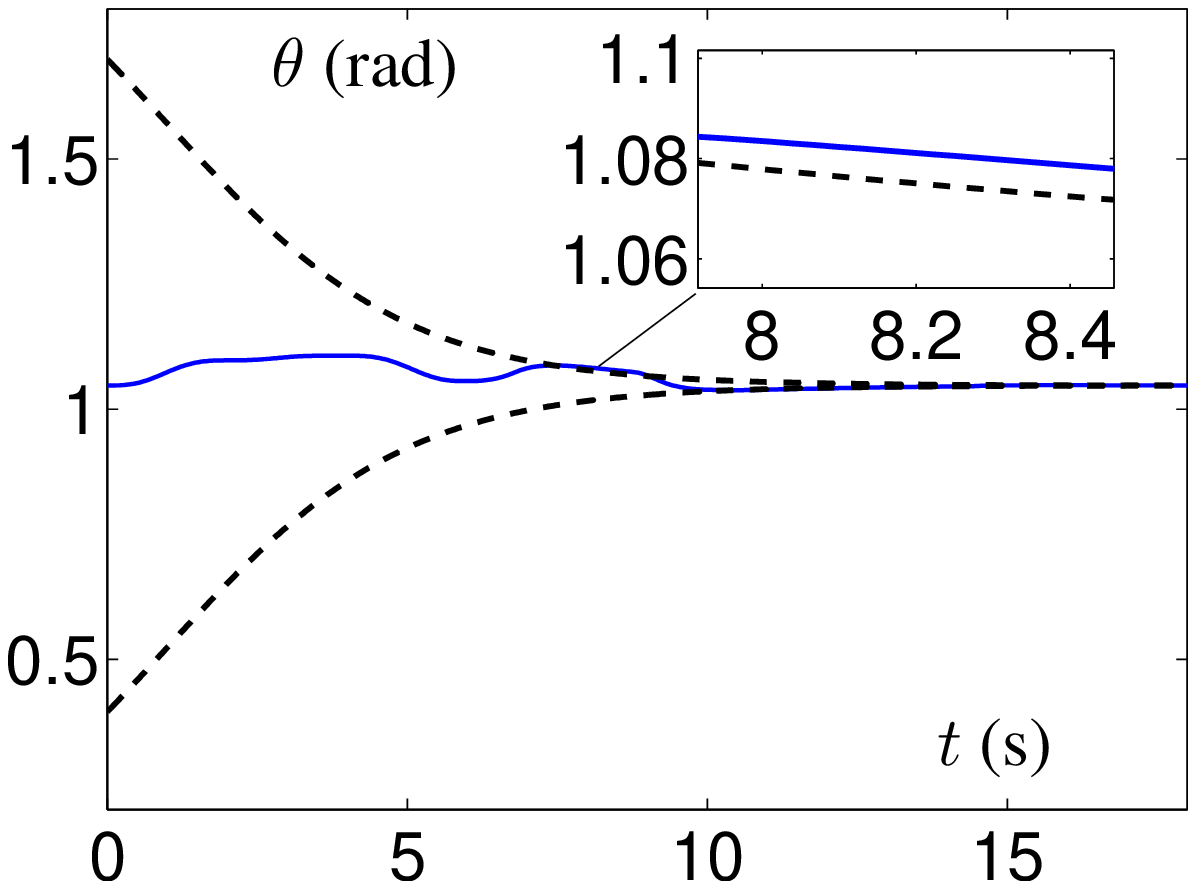}}
\subfigure[]{\includegraphics[width=0.34\columnwidth]{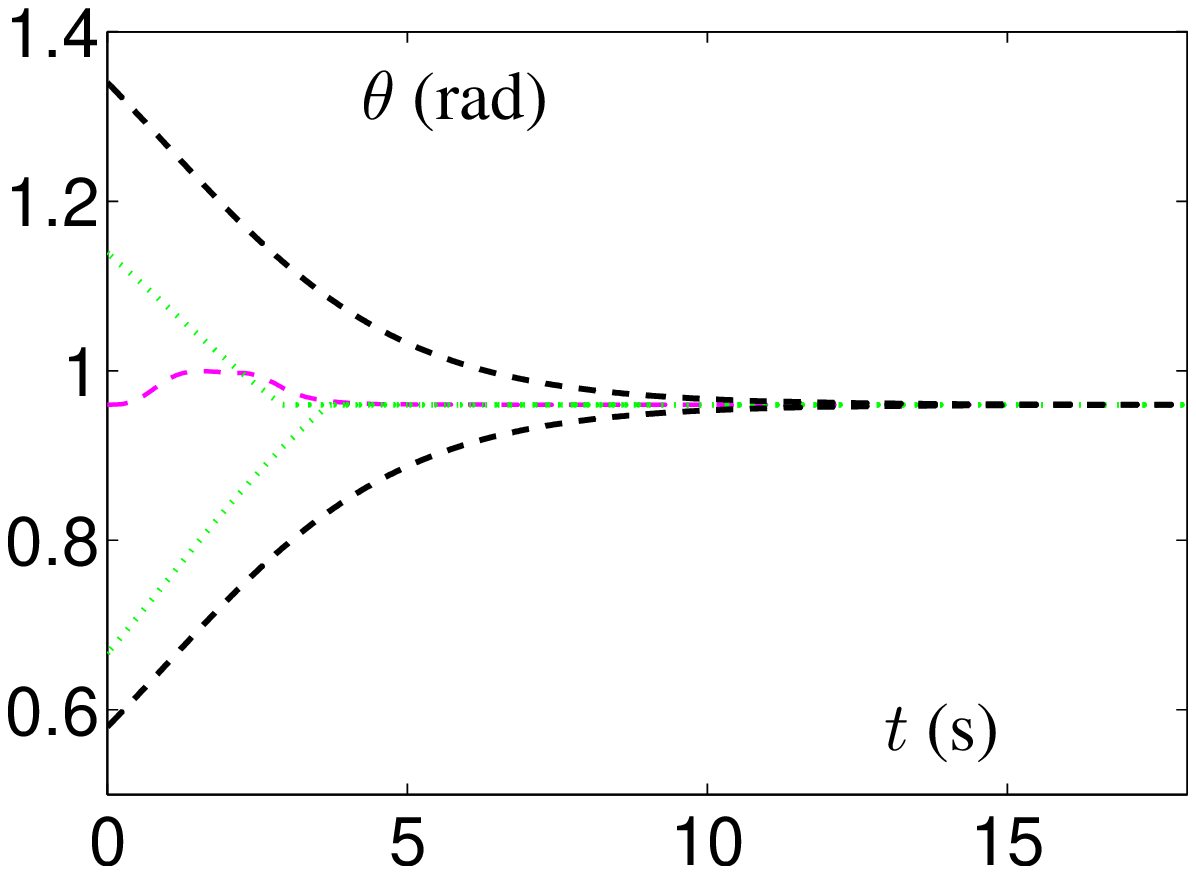}}
\subfigure[]{\includegraphics[width=0.33\columnwidth]{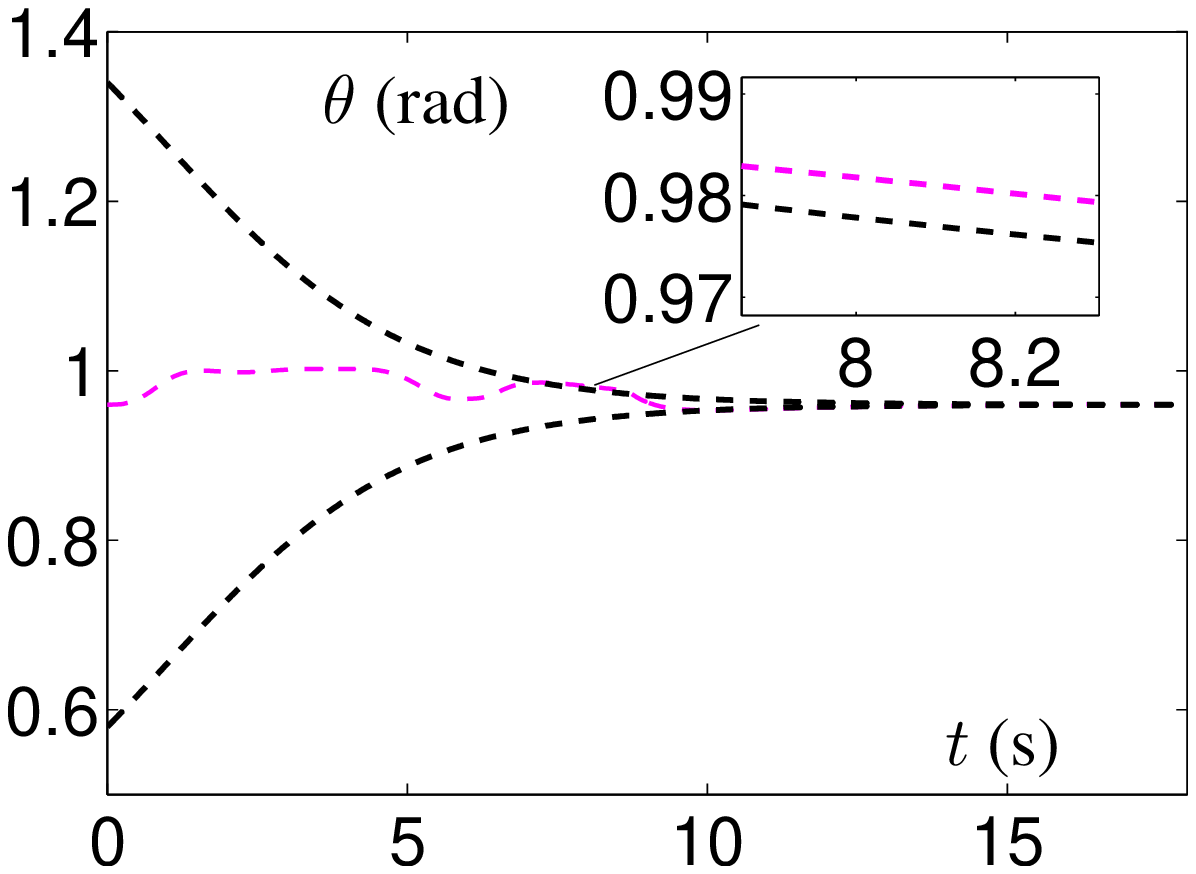}}
\subfigure[]{\includegraphics[width=0.33\columnwidth]{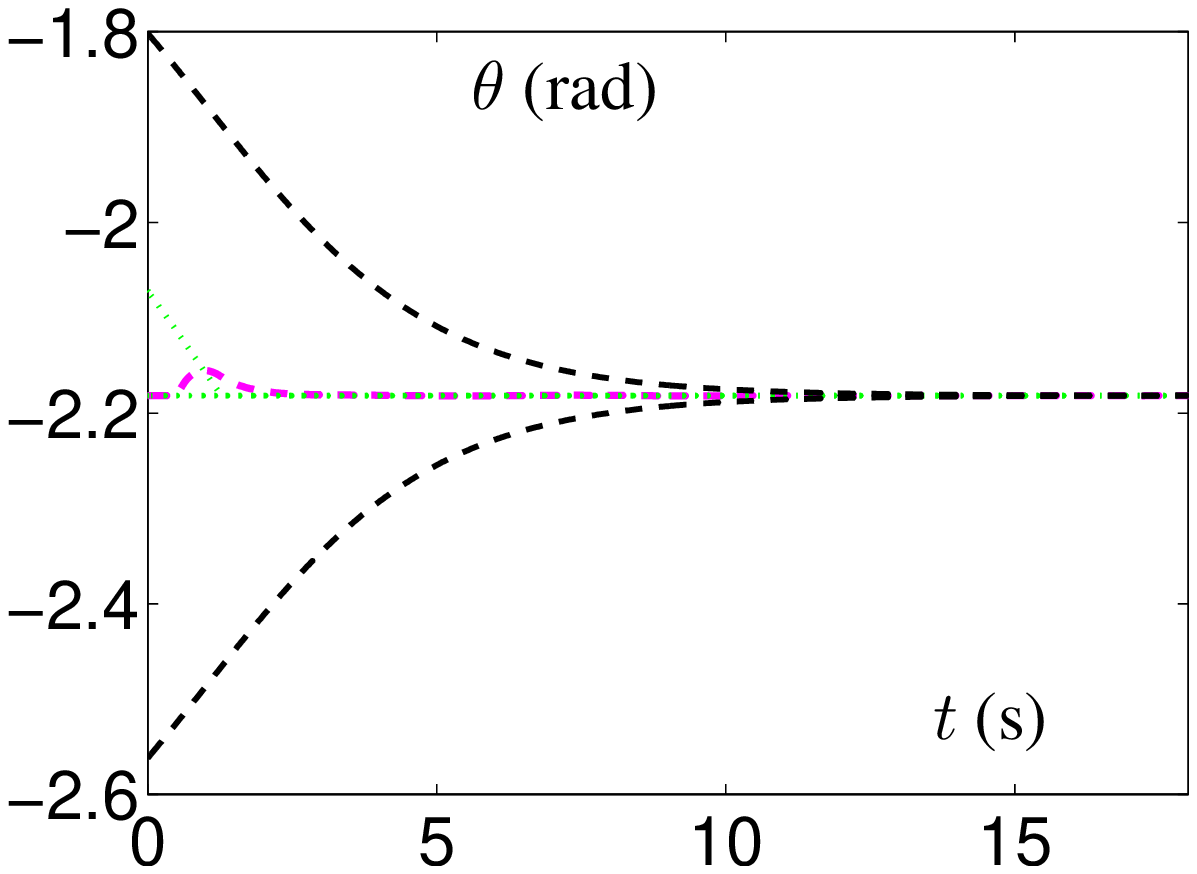}}
\subfigure[]{\includegraphics[width=0.33\columnwidth]{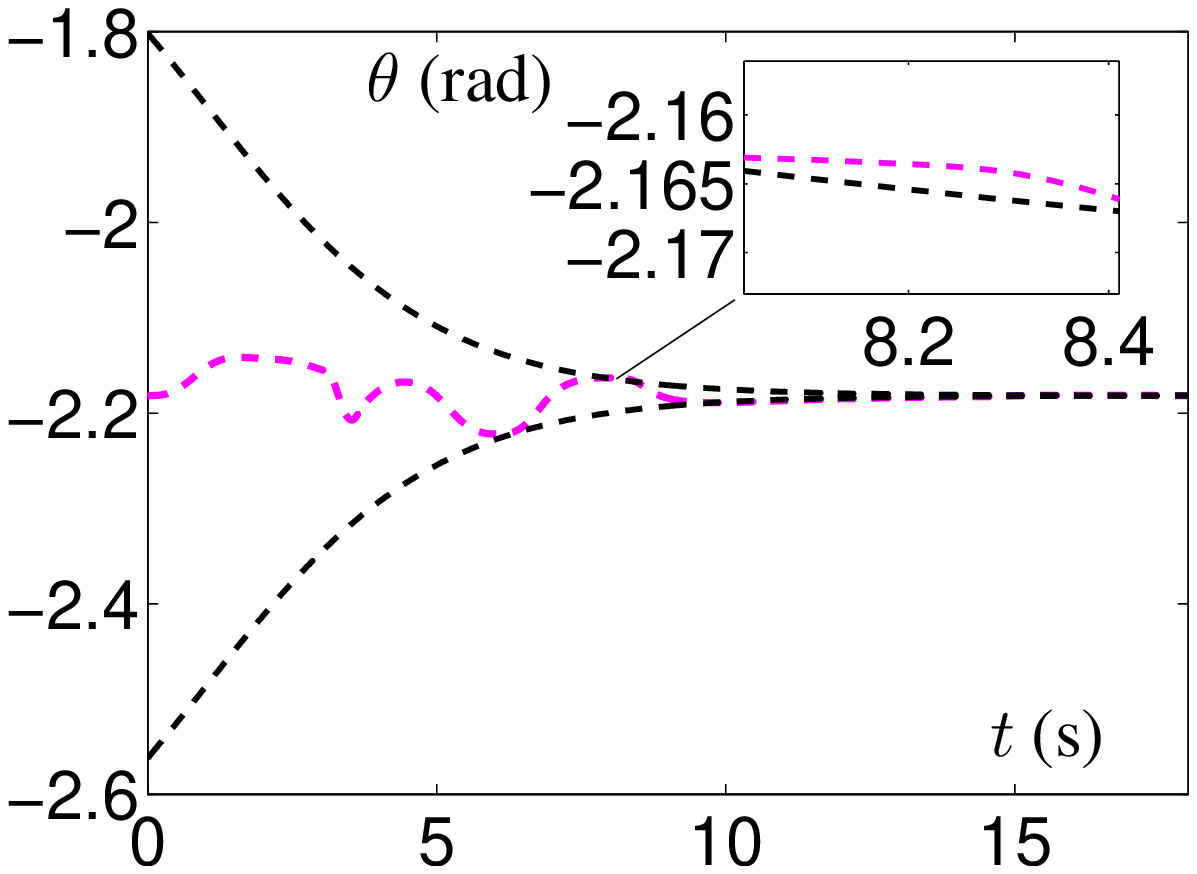}}
\caption{Comparisons between GDDF and MGDDF when $\varrho=2$, $N=4$. (a) Limit of joint $6$ with margin. (b) Limit of joint $6$ without margin. (c) Limit of joint $7$ with margin. (d) Limit of joint $7$ without margin. } \label{fig.c2n4}
\end{figure}

For joint $14$, the upper limit and lower limit both $<0$, hence
$$ \tilde{\Theta}^+ =\left\{
\begin{aligned}
&\Theta_{set}^+(t), && \tilde{\Theta}^+> \Theta_{g}^+/(2-\kappa)\\
&\Theta_{g}^+/(2-\kappa), && \tilde{\Theta}^+ \leqslant \Theta_{g}^+/(2-\kappa)
\end{aligned}
\right.
$$

$$ \tilde{\Theta}^- =\left\{
\begin{aligned}
&\Theta_{set}^-(t), && \tilde{\Theta}^-< \Theta_{g}^-/\kappa\\
&\Theta_{g}^-/\kappa, && \tilde{\Theta}^- \geqslant \Theta_{g}^-/\kappa.
\end{aligned}
\right.
$$

\subsection{Comparisons between GDDF and MGDDF}

\begin{figure}[tbp]
\centering
\subfigure[]{\includegraphics[width=0.251\columnwidth]{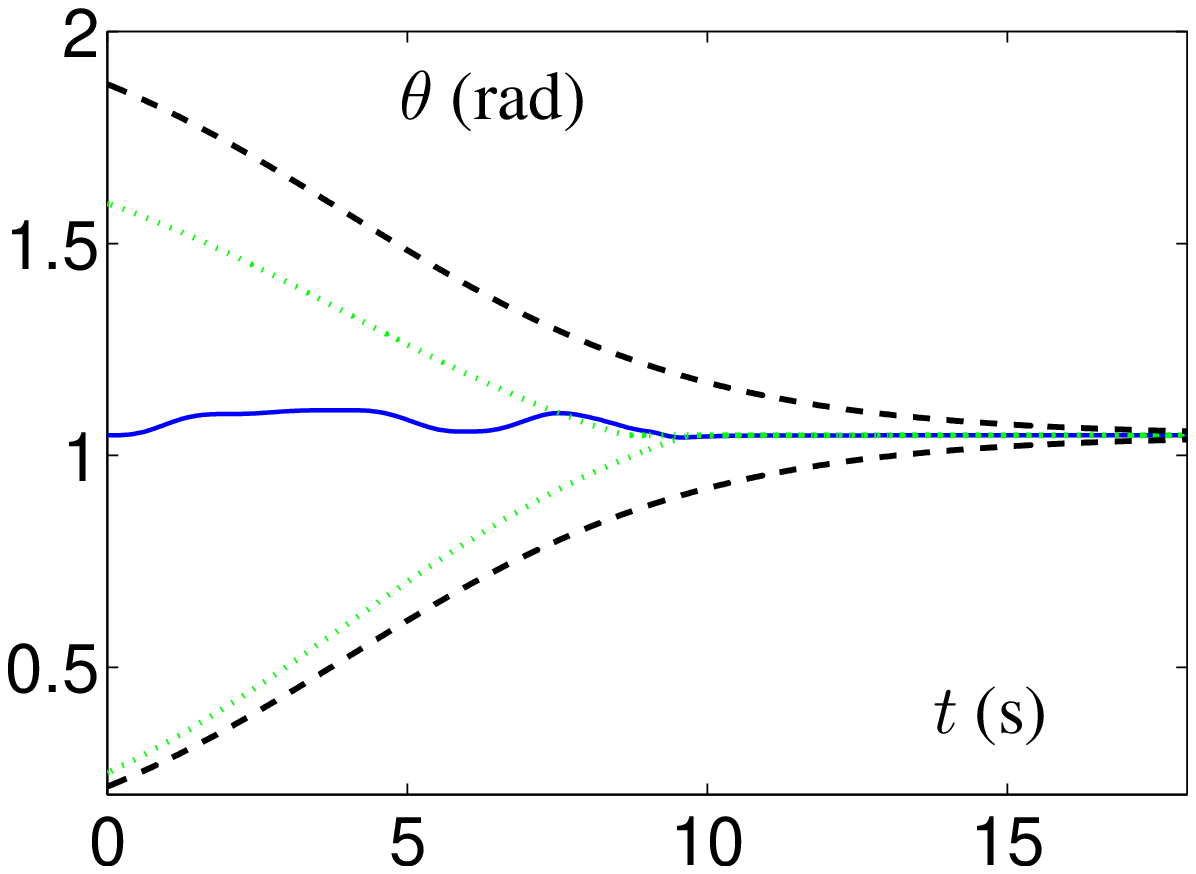}}
\subfigure[]{\includegraphics[width=0.251\columnwidth]{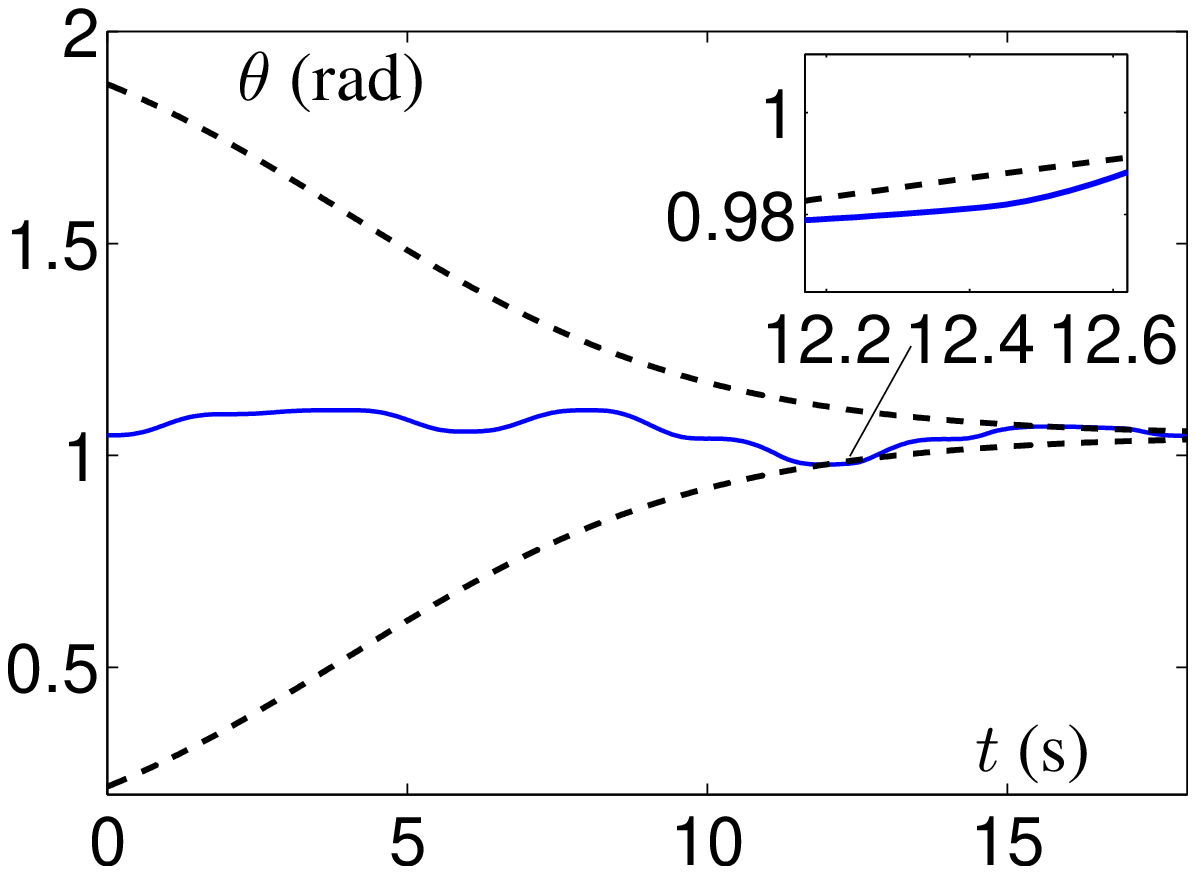}}
\subfigure[]{\includegraphics[width=0.241\columnwidth]{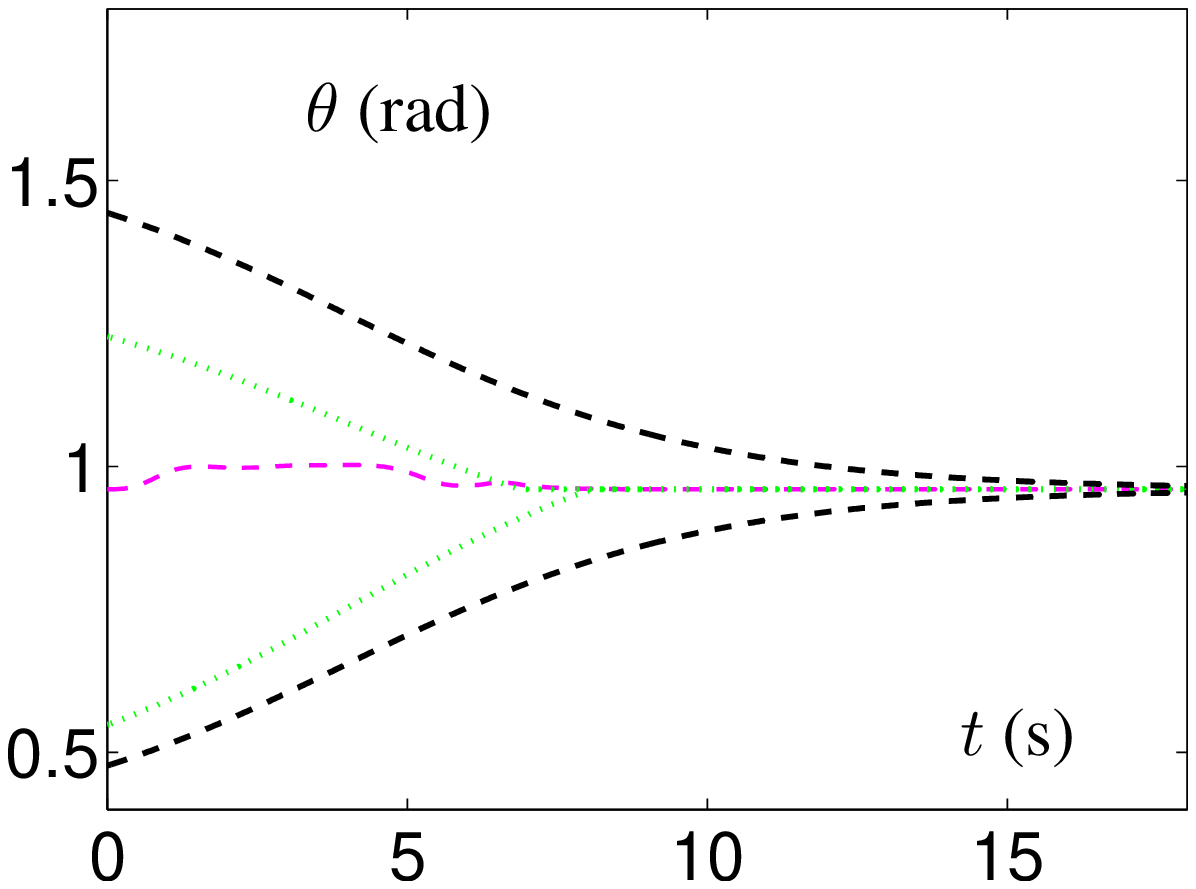}}
\subfigure[]{\includegraphics[width=0.241\columnwidth]{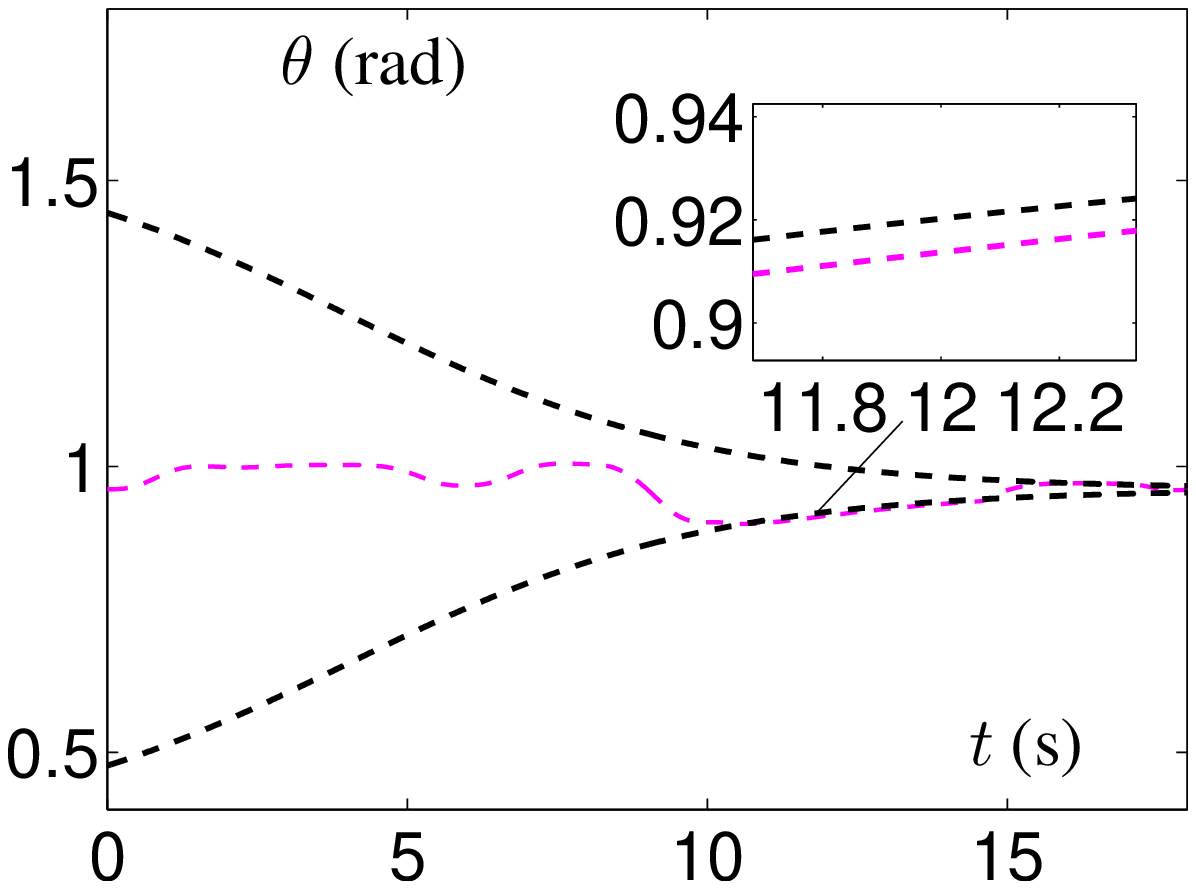}}
\subfigure[]{\includegraphics[width=0.251\columnwidth]{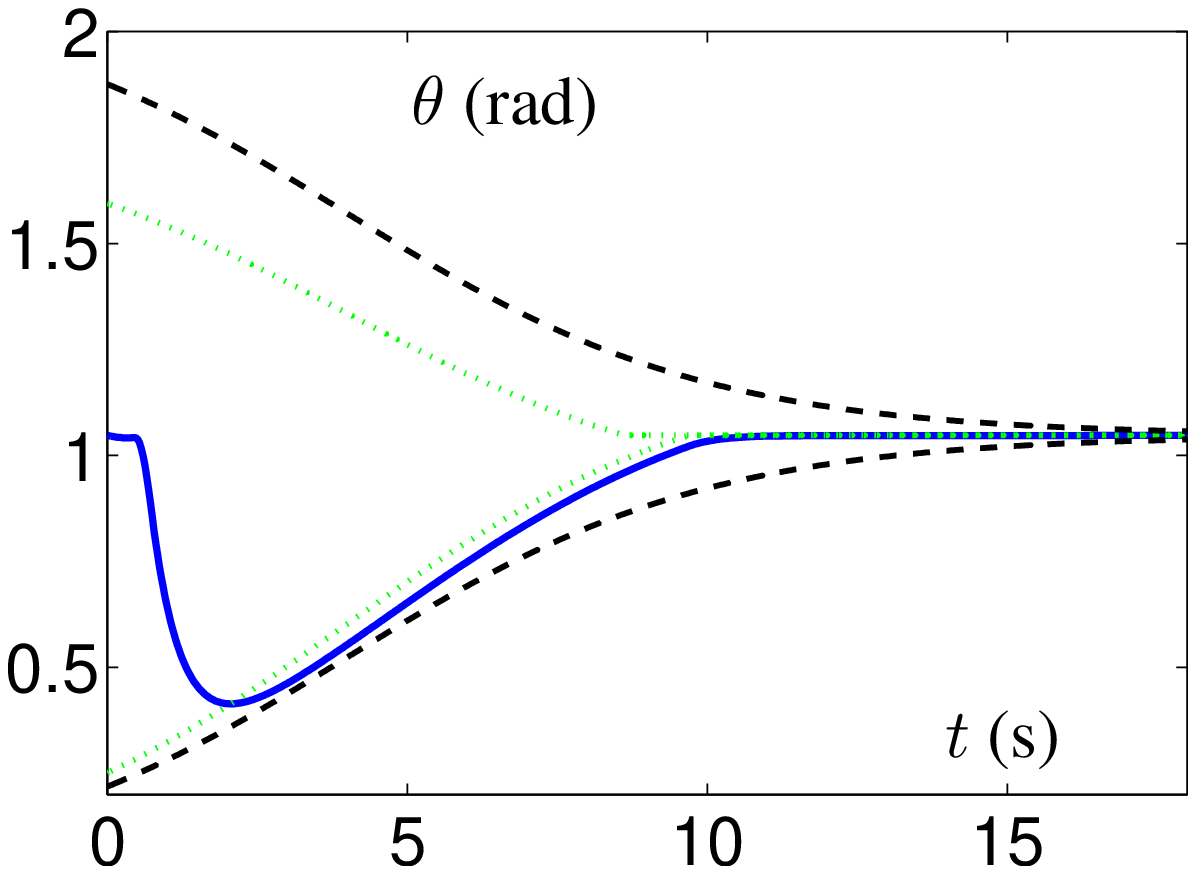}}
\subfigure[]{\includegraphics[width=0.251\columnwidth]{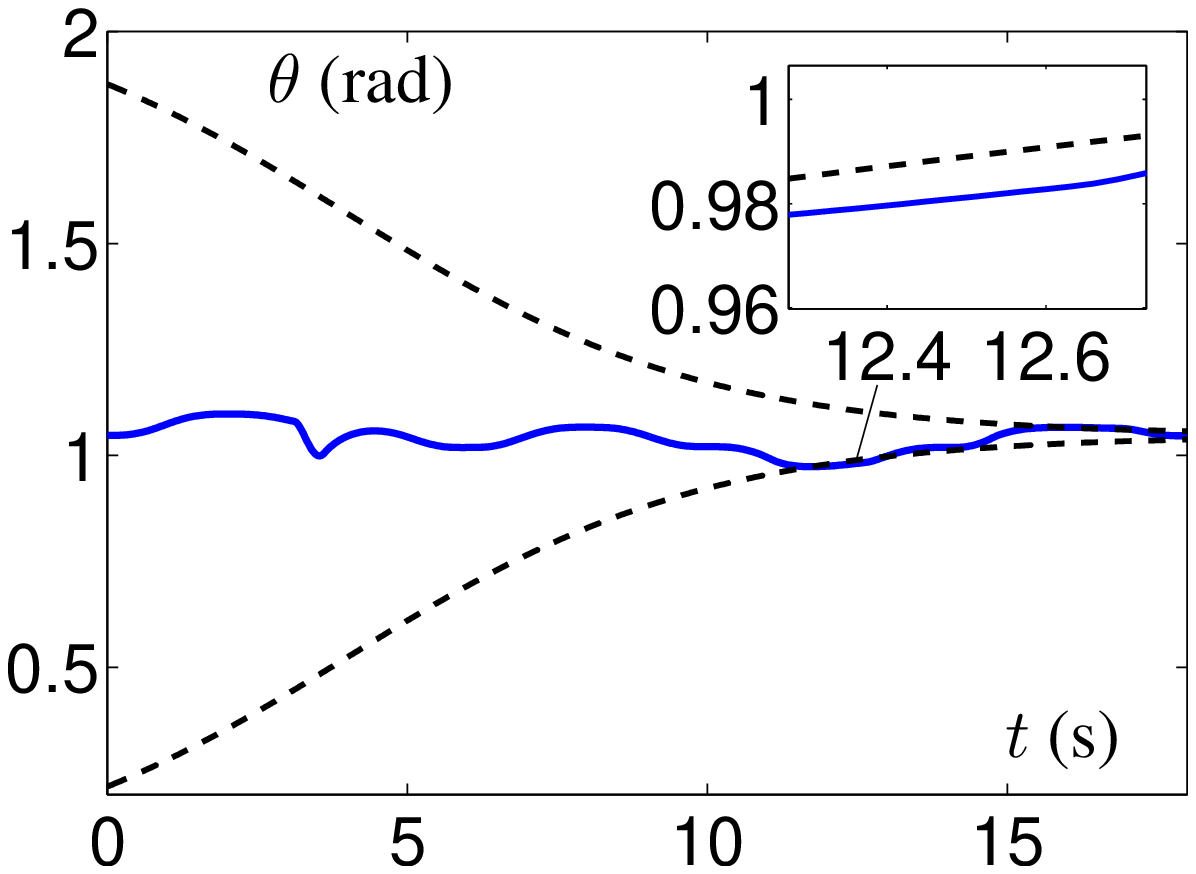}}
\subfigure[]{\includegraphics[width=0.241\columnwidth]{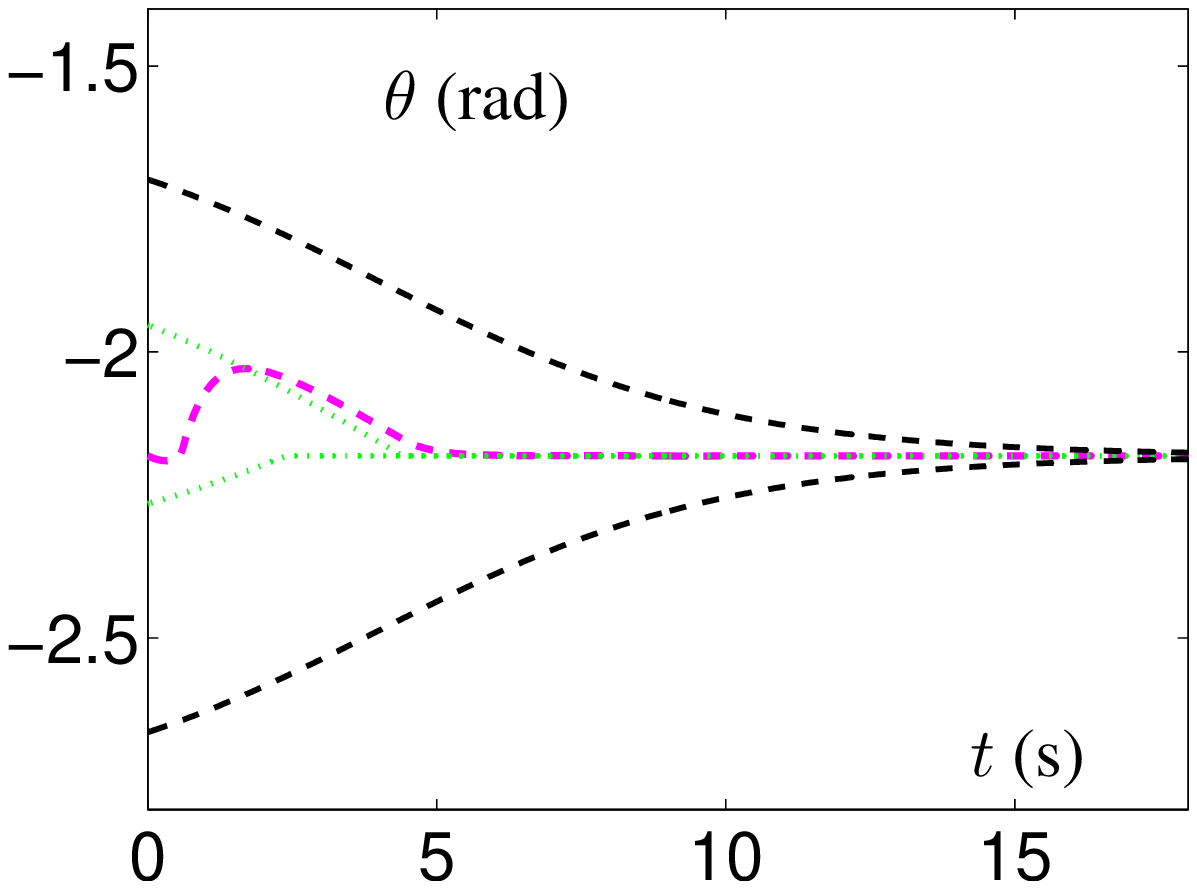}}
\subfigure[]{\includegraphics[width=0.241\columnwidth]{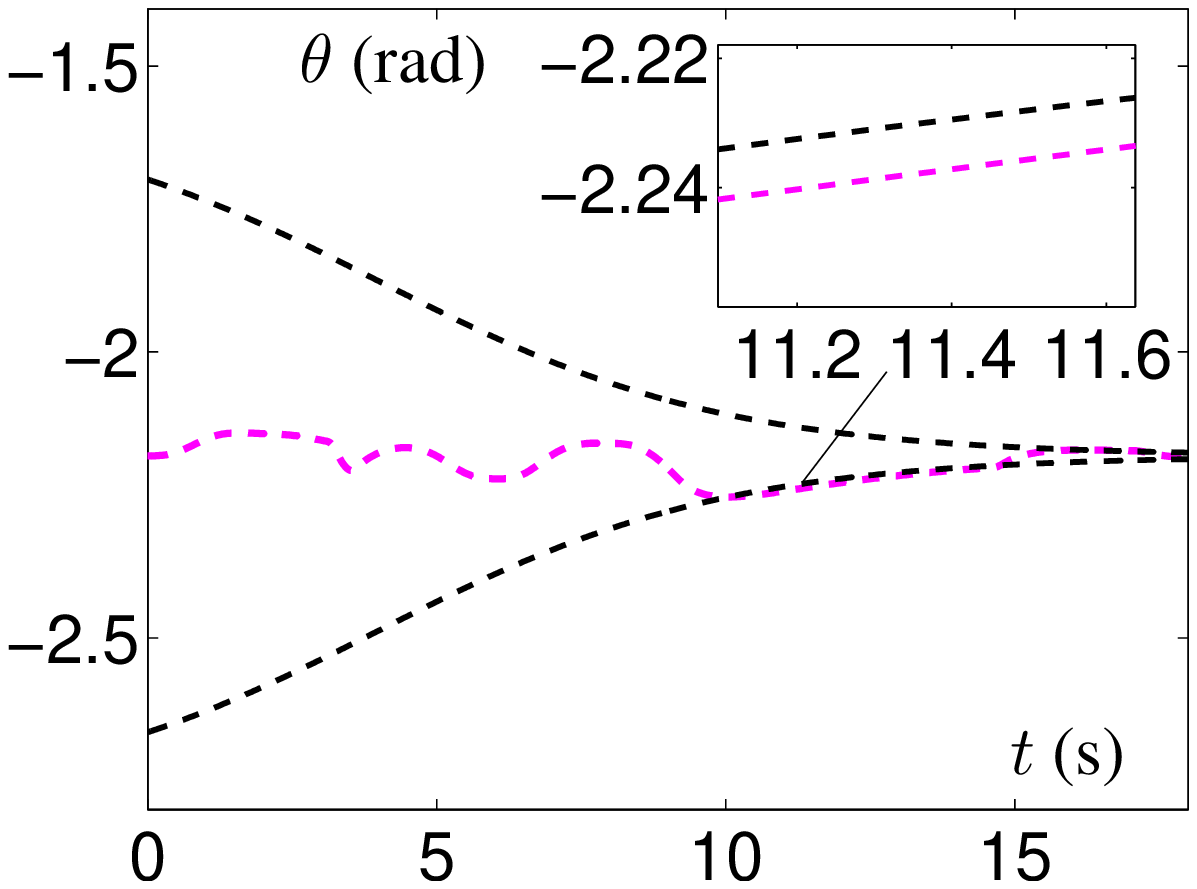}}
\caption{Comparisons between GDDF and MGDDF when $\varrho=3$, $N=1$. (a) Limit of joint $6$ with margin. (b) Limit of joint $6$ without margin. (c) Limit of joint $7$ with margin. (d) Limit of joint $7$ without margin. (e) Limit of joint $13$ with margin. (f) Limit of joint $13$ without margin. (g) Limit of joint $14$ with margin. (h) Limit of joint $14$ without margin.}\label{fig.c3n1}
\end{figure}

In this part, experiments are made to compare the proposed MGDDF scheme with the traditional GDDF scheme. Joints $6$, $7$, $13$ and $14$ are the main research objets. Similar to before, $\Theta_{g}^+$ and $\Theta_{g}^-$ of joints $6$, $7$, $13$ and $14$ are set as the same values.

The parameter $\varrho$ is set as $\varrho=2$ and $N$ is set as $N=2$. From Fig. \ref{fig.c2n2} we could see that when applying GDDF scheme, joint $6$ exceed the limit of margins, and it cannot exceed its real limit. The similar situation happens in the experiment of joint $7$.

Now parameter $N$ changes from $2$ to $3$. Joints $6$, $7$ and $14$ exceed their limits again. From Fig. \ref{fig.c2n3} (a) (c) and (e) we could see that by containing the margins, the proposed MGDDF can successfully complete the task. Similar conclusion can be drew when parameters $\varrho=2$ and $N=4$ for joints $6$ and $7$ in Fig. \ref{fig.c2n4}, parameters $\varrho=3$ and $N=1$ for joints $6$, $7$, $13$ and $14$ in Fig. \ref {fig.c3n1}, and parameters $\varrho=4$ and $N=1$ for joints $7$ and $14$ in Fig. \ref {fig.c4n1}.

\subsection{The ``Ball-catch'' Experiment}

\begin{figure}[tbp]
\centering
\subfigure[]{\includegraphics[width=0.251\columnwidth]{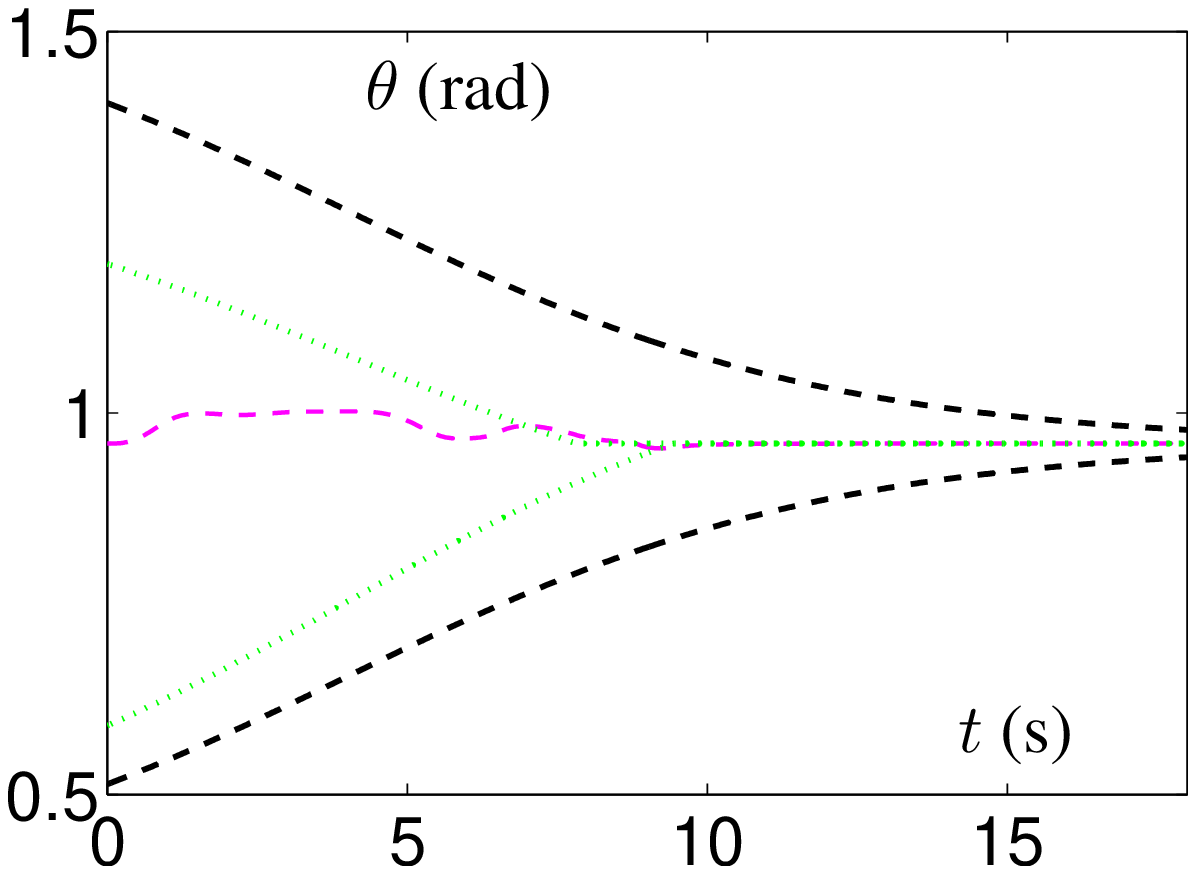}}
\subfigure[]{\includegraphics[width=0.251\columnwidth]{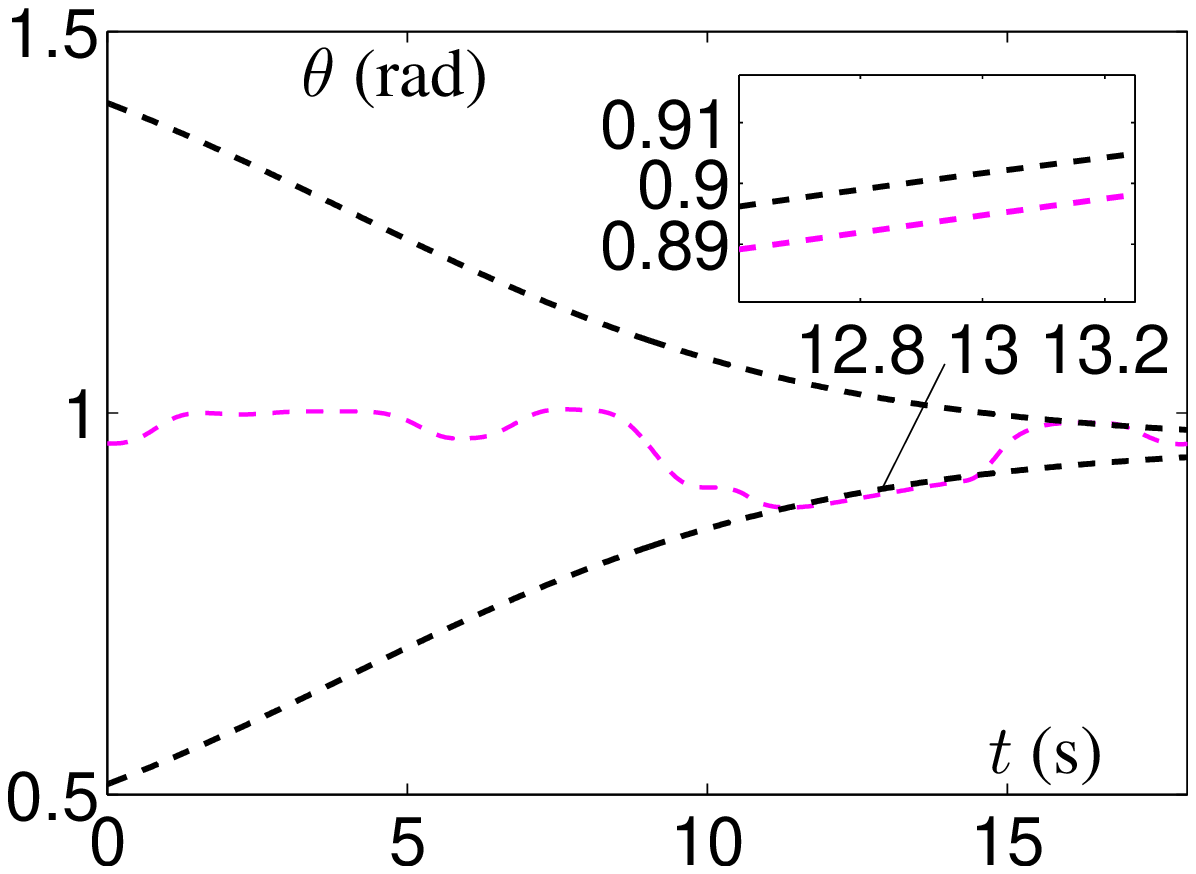}}
\subfigure[]{\includegraphics[width=0.241\columnwidth]{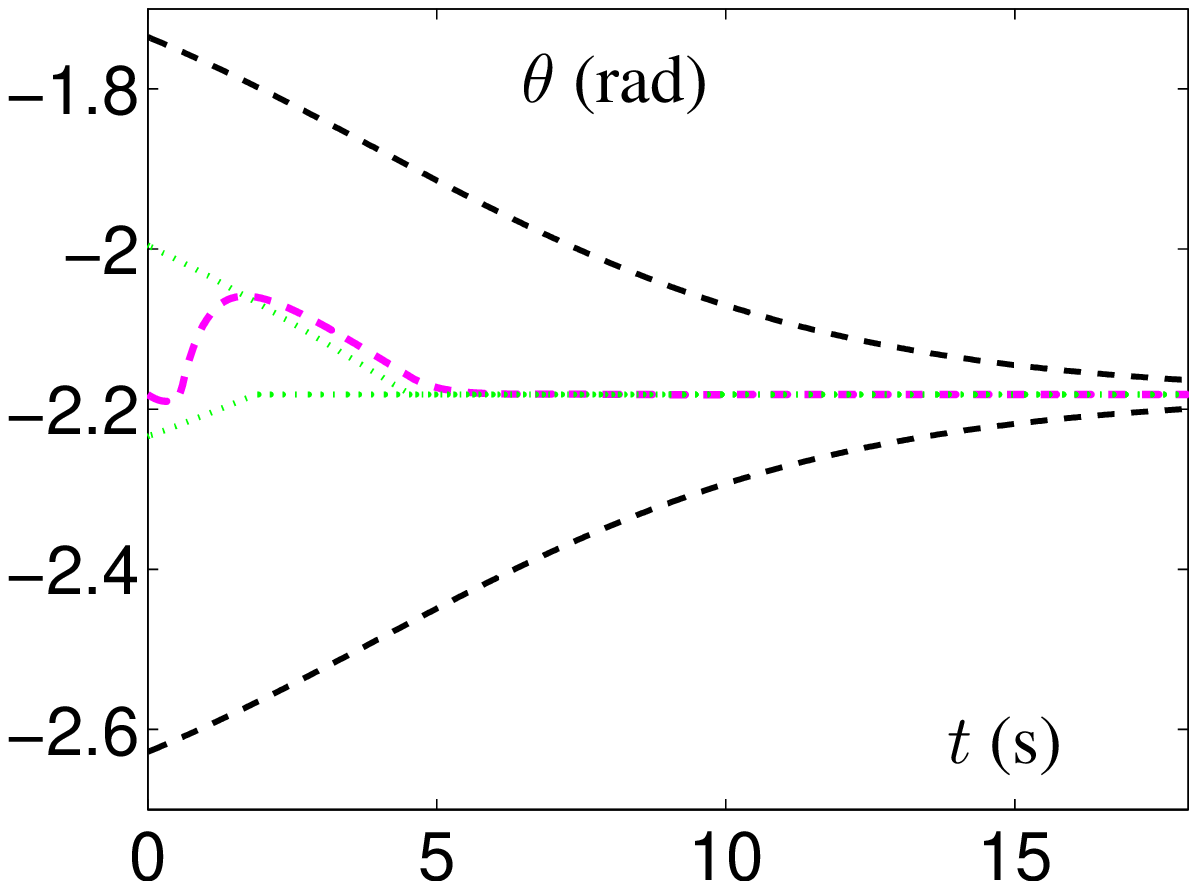}}
\subfigure[]{\includegraphics[width=0.241\columnwidth]{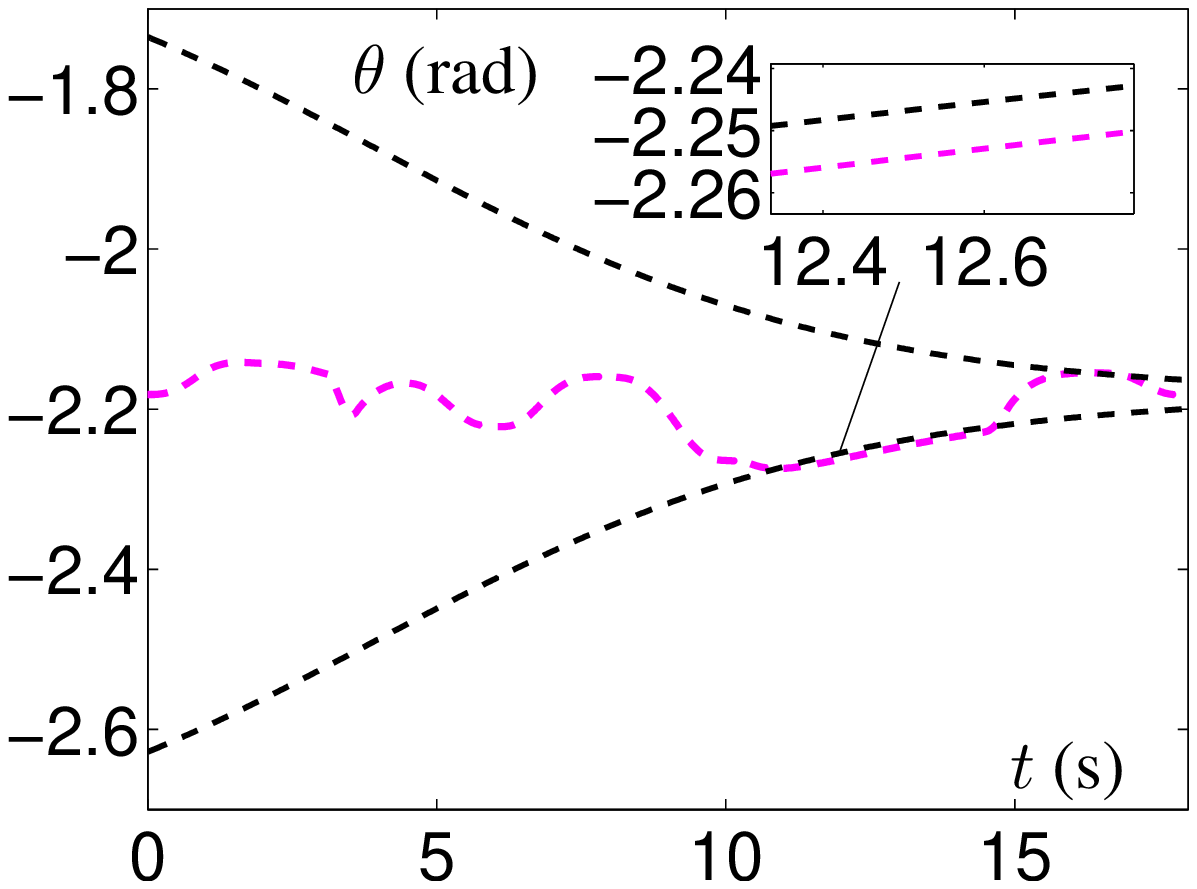}}
\caption{Comparisons between GDDF and MGDDF when $\varrho=4$, $N=1$. (a) Limit of joint $7$ with margin. (b) Limit of joint $7$ without margin. (c) Limit of joint $14$ with margin. (d) Limit of joint $14$ without margin.} \label{fig.c4n1}
\end{figure}

\begin{figure}[tbp]
\centering
\subfigure[]{\includegraphics[width=0.49\columnwidth]{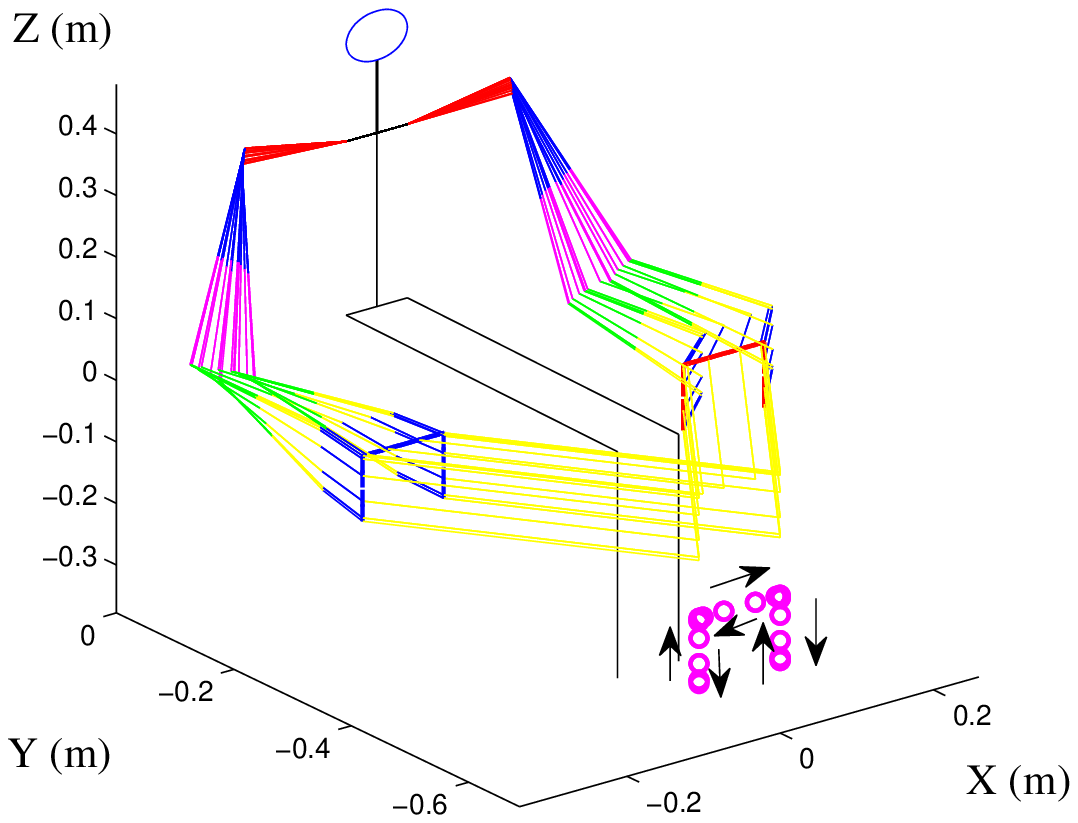}}
\subfigure[]{\includegraphics[width=0.48\columnwidth]{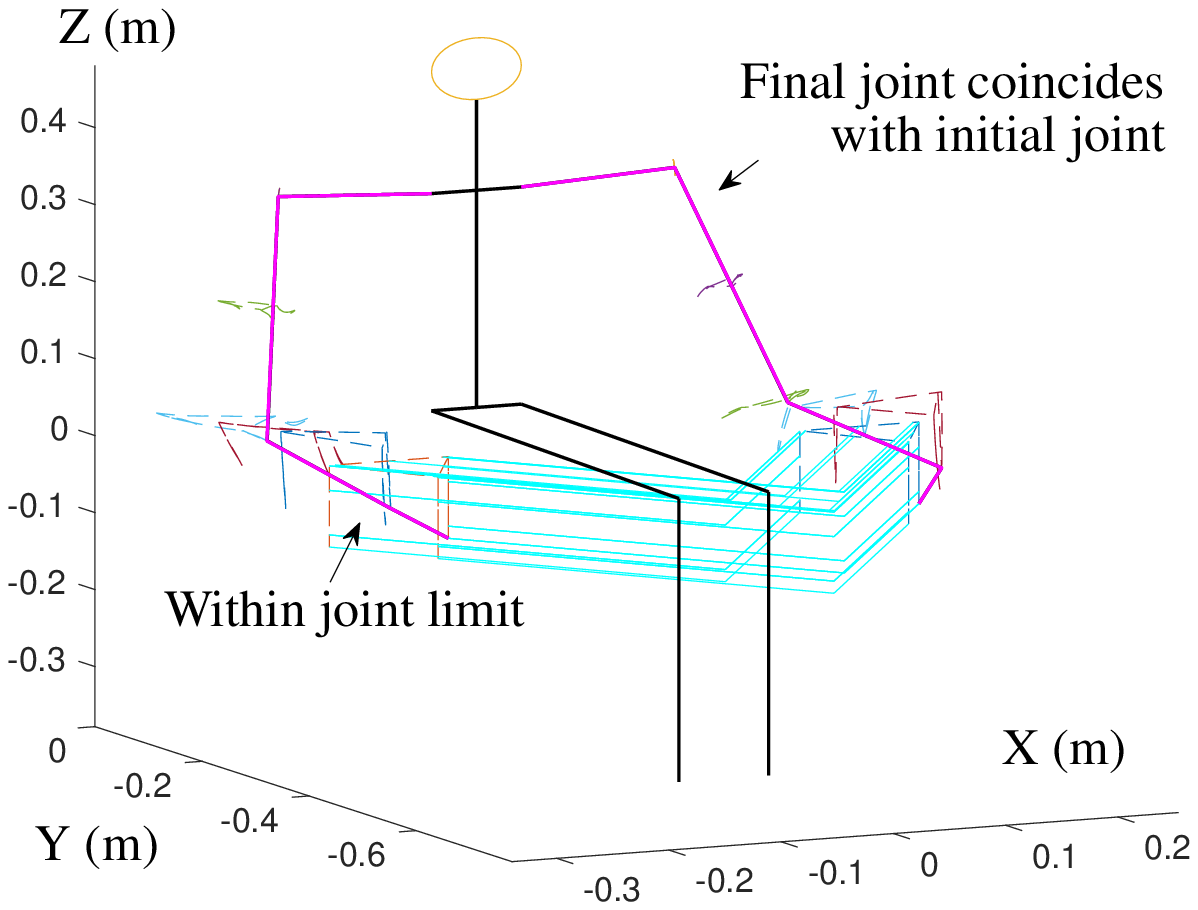}}
\caption{Motion trajectories of humanoid robot when applying to a ``ball-catch'' experiment.
(a) Tracking trajectories of each joint. (b) Joint state of the dual arms.}
\label{fig.ball}
\end{figure}

To validate the feasibility of the proposed method, a ``ball-catch'' experiment which was based on the 14 DOF humanoid is designed. For this experiment, the end-effector task is to move the ball up, left, down, and then back. Fig. \ref{fig.ball} demonstrates the motion trajectories and joint state of the dual arms of the humanoid robot in this experiment. The task can be completed and all the 14 joints maintain their joint limits successfully. All the simulative experiment results illustrated above verify the effectiveness of the proposed MGDDF method.

\section{Conclusions}
In this paper, the gesture-determined-dynamic function (GDDF) scheme is promoted and its deficiency is fixed. With consideration of three margins, a modified scheme named MGDDF is proposed for solving kinematic problems of dual arms of humanoid robots. Embed into a quadratic programming framework, the gesture of dual arms can be smoothly achieved by using MGDDF method, and the joints would not exceed their limits during the execution of tasks. Computer simulations verify the feasibility, accuracy and superiority of the proposed MGDDF method for solving motion planning and gesture determination of humanoid robots.

\section*{Nomenclature}
The full name of abbreviations proposed in this paper are listed as follows:
\begin{eqnarray*}
\begin{split}
\textsc{GDDF} ~~~~~~~~~ &\textmd{Gesture-determined-dynamic function}.\\
\textsc{MGDDF} ~~~~~ &\textmd{Modified gesture-determined-dynamic function}.\\
\textsc{QP} ~~~~~~~~~~~~~~ \ &\textmd{Quadratic programming}.\\
\textsc{DOF} ~~~~~~~~~~~ \ &\textmd{Degrees-of-freedom}.\\
\textsc{MKE} ~~~~~~~~~~ \ &\textmd{Minimum-kinetic-energy}.\\
\textsc{RMP} ~~~~~~~~~~ \ &\textmd{Repetitive motion planning}.\\
\textsc{MVN} ~~~~~~~~~ \ &\textmd{Minimum-velocity-norm}.\\
\textsc{LVI} ~~~~~~~~~~~~~ \ &\textmd{Linear-variational-inequality}.
\end{split}
\end{eqnarray*}


\end{document}